\pdfoutput=1

\documentclass{article}
\usepackage{amd}

\newcommand{\AMDlogo}{\includegraphics[height=0.28in]{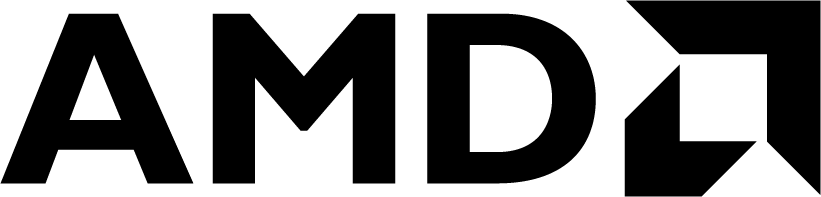}}

\usepackage{amsmath,amsfonts,bm}

\def\secref#1{section~\ref{#1}}
\def\Secref#1{Section~\ref{#1}}
\def\eqref#1{equation~\ref{#1}}
\def\1{\bm{1}}

\DeclareMathAlphabet{\mathsfit}{\encodingdefault}{\sfdefault}{m}{sl}
\SetMathAlphabet{\mathsfit}{bold}{\encodingdefault}{\sfdefault}{bx}{n}

\usepackage{hyperref}
\usepackage{url}
\usepackage{booktabs}
\usepackage{amsthm}
\usepackage{amssymb}
\usepackage{float}
\usepackage{pgfplots}
\pgfplotsset{compat=1.18}
\usepgfplotslibrary{groupplots}
\usetikzlibrary{arrows.meta,positioning,calc}
\definecolor{tgreen}{HTML}{228A49}
\definecolor{tblue}{HTML}{2575FC}
\definecolor{tgold}{HTML}{8A6410}
\newcommand{\corrsymbol}{{\boldmath$\ddagger$}}

\newtheorem{proposition}{Proposition}
\newtheorem{corollary}{Corollary}

\newcommand{\Pmax}{P_{\mathrm{max}}}
\newcommand{\ptrunct}{p_{\mathrm{trunc},t}}

\begin{document}
\onecolumn
\amdtitlerunning{Long-Context Drafting Does Not Require Long-Context Drafters}

\amdtitle{Long-Context Drafting Does Not Require\\ Long-Context Drafters}

\vspace*{-0.5\baselineskip}
{\centering\setlength{\parskip}{0pt}%
  {\bfseries Aref Jafari\textsuperscript{1,\corrsymbol}\qquad Vansh Bhatia\textsuperscript{1}\qquad Matt Qin\textsuperscript{1}\qquad Subhajit Dutta Chowdhury\textsuperscript{1}\qquad Ben Lee\textsuperscript{1}\\ Vikram Appia\textsuperscript{1}\qquad Ali Ghodsi\textsuperscript{2,\corrsymbol}\qquad Mehdi Rezagholizadeh\textsuperscript{1,\corrsymbol} \par}
  \vspace{0.75em}
  \textsuperscript{1}Advanced Micro Devices Inc. (AMD) ~~
  \textsuperscript{2}University of Waterloo \par
   \vspace{0.15em}
    \textsuperscript{\corrsymbol}\,Correspondence to:
    \texttt{\char`\{aref.jafari, mehdi.rezagholizadeh\char`\}@amd.com} \\
    \texttt{ali.ghodsi@uwaterloo.ca}\par}

\amdkeywords{Speculative decoding, long-context inference, rotary position embeddings, LLM serving}
\vskip 0.15in

\begin{abstract}
Speculative decoding pairs a cheap drafter with parallel verification against the target.
At ultra-long context it stops paying, because a drafter that attends over the full prefix inherits the target's context-dependent cost, so the speedup it was meant to provide first erodes and then reverses.
The standard response is to build a long-context drafter, by training or fine-tuning one, or by using a multi-token-prediction head shipped inside the target checkpoint.
We argue this is unnecessary, because the drafters in use are \emph{target-conditioned}, reading hidden states the target computed over the entire prefix, so distant context already reaches them without their own attention, and verification corrects what they miss.
We therefore freeze released short-context drafters and run them far past their training length under two inference-time mechanisms, one bounding the attention cache they see, which fixes their cost, and one re-basing their rotary position indices into the trained range.
No weight is touched and the only tunable saturates early, so one {vLLM}-side configuration spans 4k to two million tokens, costing at most $0.16\times$ of speedup at short length.
We prove when a common index shift leaves the drafter's computation exactly unchanged, and decompose acceptance into two measurable gaps with sharp bounds.
Whatever speedup a frozen drafter delivers at short context, it retains out to two million tokens, across three target–drafter pairs and five corpora. The claim concerns the flatness rather than the multiple, and removing either mechanism eliminates it. The gain is conditional on the speculative cost per emitted token staying below the matched target-only cost.
\end{abstract}

\section{Introduction}
\label{sec:intro}

Speculative decoding~\citep{leviathan2023speculative,chen2023accelerating} reduces generation latency and is designed to leave the target's output distribution unchanged, a drafter proposing $K$ tokens that the target verifies in one forward pass and corrects itself where it rejects.
The speedup depends on how often proposals are accepted and on how cheap drafting is relative to a target decode step, and at ultra-long context the second quantity is where the method fails.
Writing $N$ for the context length, a drafter attending over a growing prefix pays attention cost scaling in $N$ exactly as the target does, so the ratio of draft to target cost does not improve with scale, and at million-token contexts a full-attention drafter is not a cheap approximation of the target but a second expensive model.  The asymmetry is in memory traffic, a $K$-token draft performing $K$ sequential reads of a {KV} cache of size $N$ where verification performs one parallel read of the same cache, so drafting pays that bandwidth cost $K$ times per round and verification once, which is why the drafter is the side worth bounding.
The field's response has been to build drafters suited to long context, by training them~\citep{yang2026longspec,wang2026specla,li2024eagle} or by using a native multi-token-prediction head that only checkpoints shipping such a head can use~\citep{valliappan2026windowed}; each either requires training or requires the target to have been built with drafting in mind.

We argue the premise is wrong: \textbf{long-context drafting does not require long-context drafters.}
The drafters at issue are \emph{target-conditioned}: in the {EAGLE} family~\citep{li2024eagle,li2025eagle3} and in {DSpark}~\citep{cheng2026dspark} the drafter's per-position input concatenates a token embedding with hidden states the target computed over the entire prefix, so distant context reaches it through those features rather than through its own attention, and bounding how many positions it attends over bounds its cost without cutting it off from the prefix.  Verification makes this safe rather than making it work, a guess that does depend on distant information being rejected at a bounded cost of one round.
Running such a drafter far past its training length fails in two separately fixable ways, shown together in Figure~\ref{fig:overview}.  Its attention \textbf{cost} grows with the prefix, so we bound it to the most recent $W$ tokens and leave the target verifying against the complete context.  Windowing does not bound the \textbf{position} indices, so a rotary drafter still evaluates its transform on indices of order $N$, beyond the finite reach $\Pmax$ its implementation was built for; that failure is quiet, degrading acceptance while a correct verifier leaves the target's intended output law unchanged, and we remove it by re-basing indices into $[0,\Pmax)$, a common shift within an index band whose output equivalence holds under Proposition~\ref{prop:invariance}.  The two are not substitutes, windowing alone leaving indices to grow with context and re-basing alone placing a growing dependency graph across bands where no single shift preserves every offset.  Both are weight-preserving and both are implemented as a patch to {vLLM}~\citep{kwon2023vllm}, the engine in which every result below is measured.

\begin{figure}[t]
\centering
\begin{tikzpicture}
\begin{groupplot}[
  group style={group size=2 by 1, horizontal sep=1.05cm},
  width=.52\linewidth, height=4.2cm,
  xmode=log, log basis x=2,
  xlabel={Context length $N$},
  xmin=6000,xmax=3200000,ymin=0,ymax=2.82,
  xtick={8192,32768,131072,524288,2097152},
  xticklabels={8k,32k,128k,512k,2M},
  ytick={0,0.5,1,1.5,2,2.5},
  xticklabel style={font=\scriptsize},
  yticklabel style={font=\scriptsize},
  label style={font=\scriptsize},
  title style={font=\scriptsize},
  grid=major,
]
\nextgroupplot[ylabel={Speedup $S$},title={{Inferact}} (five {MLA} draft layers),
  legend style={at={(1.07,1.30)},anchor=south,font=\scriptsize,legend columns=3,draw=none,fill=none}]
\addplot[blue,thick,mark=*,mark size=1.2pt] coordinates
{(8192,2.51)(16384,2.36)(32768,2.1)(131072,2.41)(524288,2.46)(1048576,2.55)(2097152,2.25)};
\addlegendentry{windowed ${}+{}$ re-based (ours)}
\addplot[orange!85!black,thick,dashdotted,mark=triangle*,mark size=1.6pt] coordinates
{(8192,2.2)(16384,2.33)(32768,1.9)(131072,2.31)(524288,2.6)(1048576,2.24)(2097152,0.88)};
\addlegendentry{windowed only}
\addplot[red,thick,dashed,mark=square,mark size=1.3pt] coordinates
{(8192,2.03)(16384,2.39)(32768,2.29)(131072,1.29)(524288,0.79)(1048576,0.66)(2097152,0.56)};
\addlegendentry{unwindowed}
\addplot[black,dotted,thick,forget plot] coordinates {(6000,1)(3200000,1)};
\addplot[gray!70,thick,dotted,forget plot] coordinates {(1048576,0)(1048576,2.82)};
\node[font=\tiny,gray!60!black,anchor=north east] at (axis cs:1048576,2.80) {$\Pmax$};
\nextgroupplot[title={{RadixArk}} (five {GQA} draft layers)]
\addplot[blue,thick,mark=*,mark size=1.2pt] coordinates
{(8192,1.75)(16384,2.03)(32768,2.09)(131072,1.63)(524288,1.91)(1048576,1.63)(2097152,2.08)};
\addplot[orange!85!black,thick,dashdotted,mark=triangle*,mark size=1.6pt] coordinates
{(8192,1.97)(16384,1.79)(32768,2.05)(131072,1.9)(524288,1.95)(1048576,2.01)(2097152,0.87)};
\addplot[red,thick,dashed,mark=square,mark size=1.3pt] coordinates
{(8192,1.79)(16384,1.57)(32768,1.63)(131072,1.08)(524288,0.55)(1048576,0.32)(2097152,0.14)};
\addplot[black,dotted,thick,forget plot] coordinates {(6000,1)(3200000,1)};
\addplot[gray!70,thick,dotted,forget plot] coordinates {(1048576,0)(1048576,2.82)};
\node[font=\tiny,gray!60!black,anchor=north east] at (axis cs:1048576,2.80) {$\Pmax$};
\end{groupplot}
\end{tikzpicture}
\caption{Speedup against context length, $\log_2$ axis, both drafters and all three arms (\secref{sec:e4}; ladder in Table~\ref{tab:e4}).  Dotted horizontal is break-even; dotted vertical is $\Pmax=1{,}048{,}576$, the rotary limit both drafters reach by {YaRN} scaling.  Only the full method (blue) is still worth at 2M roughly what it is worth at 8k.}
\label{fig:e4}
\end{figure}

\begin{figure}[t]
\centering
\begin{tikzpicture}[
  font=\scriptsize,
  blk/.style={rounded corners=3pt, draw, semithick, align=center, inner sep=3.5pt},
  pill/.style={rounded corners=5pt, draw, semithick, fill=white, align=center,
               inner xsep=4pt, inner ysep=2.5pt},
  tok/.style={rounded corners=1.5pt, draw=black!40, fill=white,
              minimum height=10pt, inner xsep=2pt, inner ysep=0pt},
  wtok/.style={tok, draw=tgreen, text=tgreen},
  ar/.style={-{Stealth[length=4.5pt,width=3.5pt]}, semithick},
  flow/.style={ar, dash pattern=on 2.6pt off 2pt},
]

% ================ token timeline ================
\node[rounded corners=3pt, draw=black!35, fill=black!4, anchor=south west,
      minimum width=0.545\textwidth, minimum height=18pt] (distbox) at (0pt,0pt) {};
\node[rounded corners=3pt, draw=tgreen, semithick, fill=tgreen!10, anchor=south east,
      minimum width=0.425\textwidth, minimum height=18pt] (winbox) at (\textwidth,0pt) {};

\node[tok, anchor=west] (k1) at ([xshift=6pt]distbox.west) {$x_1$};
\node[tok, right=14pt of k1] (k2) {$x_2$};
\node[right=15pt of k2] (kd1) {$\cdots$};
\node[tok, right=15pt of kd1] (km) {$x_m$};
\node[right=15pt of km] (kd2) {$\cdots$};
\node[tok, right=15pt of kd2] (kw) {$x_{t-W}$};

\node[wtok, anchor=east] (v3) at ([xshift=-6pt]winbox.east) {$x_t$};
\node[wtok, left=18pt of v3] (v2) {$x_{t-1}$};
\node[left=18pt of v2, text=tgreen] (vd) {$\cdots$};
\node[wtok, left=18pt of vd] (v1) {$x_{t-W+1}$};

\node[anchor=north, text=black!55] at ([yshift=-2pt]distbox.south)
      {not attended by the drafter};
\node[anchor=north, text=tgreen] at ([yshift=-2pt]winbox.south)
      {the $W$ tokens the drafter attends over};

% ================ nested spans ================
\draw[tgreen, semithick, rounded corners=1.5pt]
      (0pt,18pt) -- (0pt,24pt) -- (\textwidth,24pt) -- (\textwidth,18pt);
\draw[tblue, semithick, rounded corners=1.5pt]
      ([yshift=9pt]winbox.north west) -- ([yshift=16pt]winbox.north west)
   -- ([yshift=16pt]winbox.north east) -- ([yshift=9pt]winbox.north east);

% ================ model blocks ================
\node[blk, draw=tgreen, fill=tgreen!12, anchor=south west] (tgt) at (0pt,92pt)
      {\textbf{Target model}\\{\color{tgreen}\itshape attends $x_1,\dots,x_t$}\\
       cost per step grows with $N$};
\node[blk, draw=tblue, fill=tblue!10, anchor=south east] (drf) at (\textwidth,92pt)
      {\textbf{Drafter}, frozen\\{\color{tblue}\itshape attends $W$ tokens}\\
       cost per step constant in $N$};

% ================ mechanism 2: re-basing ================
\node[pill, draw=tgold] (reb) at (0.775\textwidth,58pt)
      {index re-basing\\{\color{tgold}$m\mapsto m\bmod\Pmax$}};

\draw[tblue, flow] (0.775\textwidth,34pt) -- (reb.south);
\draw[tblue, flow] (reb.north) -- (drf.south -| reb.north);
\draw[tgreen, flow] (0.13\textwidth,24pt) -- (tgt.south -| 0.13\textwidth,0pt);

% ================ mechanism 1: feature channel ================
\draw[ar, tblue!75!black, line width=0.9pt] (tgt.east) -- (drf.west);
\node[pill, draw=tblue!75!black] at ($(tgt.east)!0.5!(drf.west)$)
      {target hidden states $h_{\mathrm{target}}$};

% ================ proposal and verification ================
\node[pill, draw=tblue, fill=tblue!10, anchor=east] (out) at (\textwidth,152pt)
      {$K$ draft tokens};
\draw[tblue, ar] (drf.north -| out.south) -- (out.south);
\draw[tgreen, flow] (out.west) -- (0.13\textwidth,152pt) -- (tgt.north -| 0.13\textwidth,0pt);
\node[pill, draw=tgreen] at (0.46\textwidth,152pt) {verified in one target pass};

\end{tikzpicture}
\caption{The method.  Windowing bounds \emph{what the drafter attends over}, holding its cost constant in $N$; re-basing bounds \emph{the indices it evaluates}, which windowing does not, by assigning $m\bmod\Pmax$ on cache entry.}
\label{fig:overview}
\end{figure}

Our contributions are the following.
\textbf{(i)~Theory.}
We identify when a common {RoPE} shift preserves the drafter's computation graph, and give sharp unconditional acceptance bounds through a target truncation gap and a deployed-drafter fidelity gap (\secref{sec:theory}).
\textbf{(ii)~A re-basing scheme with an exactness guarantee.}
When an output's dependency graph lies in one index band, re-basing is a common shift matching ideal true-index {RoPE} in exact arithmetic, with a key's index stable for its lifetime, so nothing cached is re-rotated.
\textbf{(iii)~A measured mechanism.}
We measure both gaps rather than assume either is small (\secref{sec:gaps}), and fidelity is large and flat in $W$, so acceptance is drafter-limited rather than window-limited.
\textbf{(iv)~Evidence at a scale where the problem is visible.}
Two frozen, released drafters retain 90\% and 119\% of their 8k speedup at 2M, against 28\% and 8\% unwindowed and 40\% and 44\% windowed without re-basing (Figure~\ref{fig:e4}, \secref{sec:e4}), reproduced on two further pairs, a dense 7B and a block-sparse target that does not itself slow with context (\secref{sec:e1}), and on four further corpora, the three reaching 1M each retaining $0.93$--$1.30$ of their 8k speedup there (Appendix~\ref{app:datasets}); a condition under which windowing \emph{fails} to pay is in Appendix~\ref{app:more}.
\textbf{(v)~Simplicity, and the reach it buys.}
The recipe needs no weight updates, no new checkpoint, and no sinks, and its one parameter, the window, is stable from 32k to 1M (Appendix~\ref{app:winabl}), so a single configuration spans 4k to 2M at a short-context cost of at most $0.16\times$; we are not aware of prior work demonstrating speculative decoding at multi-million-token context with frozen, released short-context drafters.  Item~(iv) is measured on a different pair from item~(iii), and we do not claim the latter explains the former.

\section{Background}
\label{sec:background}

Given target $p$ and drafter $q$ over vocabulary $\mathcal{V}$, a speculative round proposes $K$ tokens from $q$ and verifies them against $p$ in one parallel pass, with $p_t$ and $q_t$ their predictive distributions at position $t$ and the conditioning made explicit in \secref{sec:decomposition}.
Writing $\mathrm{TV}(u,v):=\tfrac12\|u-v\|_1$ for total variation, rejection sampling~\citep{leviathan2023speculative} gives per-token acceptance probability
\begin{equation}
\label{eq:alpha}
\alpha_t=\sum_{x:q_t(x)>0}q_t(x)\min\left\{1,\frac{p_t(x)}{q_t(x)}\right\}=1-\mathrm{TV}(p_t,q_t),
\end{equation}
Under a geometric model with common independent acceptance probability $\alpha$, the expected \emph{emitted} length, including the bonus token, is $\mathrm{MAL}(\alpha,K)=\sum_{j=0}^{K}\alpha^j$, an approximation since rejection changes later conditioning; Proposition~\ref{prop:block} gives a stage-specific bound without independence, and the reported greedy decoder accepts on exact token agreement (Proposition~\ref{prop:greedy}).
{RoPE}~\citep{su2024roformer} rotates coordinate pair $i$ of each head by $m\theta_i$ with $\theta_i=10000^{-2i/d}$, so $R(m)$ is block diagonal with $2\times2$ blocks and any block-wise property holds for the whole matrix; italic $q$ is the drafter's distribution, bold $\mathbf{q}_m$ and $\mathbf{k}_n$ a query at $m$ and a key at $n$.

\section{Related Work}
\label{sec:related}

\citet{sadhukhan2025magicdec} show speculative decoding remains profitable at large batch for long sequences using drafters with sparse {KV} caches, including a {StreamingLLM}-style windowed cache, but their draft budgets at contexts to 100k sit inside the models' native rotary range, so the position problem never arises; we run at batch~1 and one to two orders of magnitude longer.
A parallel line attacks the verification side rather than the drafter: {SpecPV}~\citep{tan2025specpv} verifies against a sparse {KV} cache, accepting divergence from full-context generation, and repairs its {EAGLE-3} drafter's positional range by {YaRN} fine-tuning, the training-based response we avoid; {TriForce}~\citep{sun2024triforce} and {TokenSwift}~\citep{wu2025tokenswift} are adjacent.  We leave verification exact and act only on the drafter's view, so the two directions compose rather than compete.
\citet{xiao2024streamingllm} enable unbounded streaming by pairing a rolling window with pinned sinks, assigning positions \emph{within the cache}; the load-bearing distinction is whether a key's index is stable for its lifetime, theirs being cache-relative and ours text-relative up to a constant per band, so their visible cache is re-rotated every decode step where ours is rotated once at entry, a recurring bandwidth cost against a one-off one.
\citet{yang2026longspec} and \citet{wang2026specla} solve the same position-generalization problem by \emph{training} long-context drafters, and Windowed-{MTP}~\citep{valliappan2026windowed} is training-free but windows a multi-token-prediction head inside the target checkpoint, sharing the target's positional regime; we window an external drafter, which is what forces the position problem.
Retrieval- and $N$-gram drafters~\citep{saxena2023pld,fu2024lookahead,ou2024anpd,he2024rest} need an exact prior occurrence, and retrieval-\emph{augmented} speculative decoding~\citep{chen2025rapid} instead hands the drafter a sparse retrieved subset of the prefix, which bounds its memory without bounding its index range, so the position problem of \secref{sec:position} persists there and re-basing applies unchanged; benchmarks remain short-context~\citep{xia2024specbench,abramovich2026speedbench}.
\textbf{Why the comparison here is internal.}  No external system is run as a baseline, because none of the above is defined at this operating point once the drafter's weights are fixed: the trained drafters need the training whose necessity is the question, the native-head method needs a head an external drafter lacks, the verification-side and retrieval-augmented methods depart from the full-context target's prediction rule, and $N$-gram drafters are not target-conditioned at all.  The nearest published configurations are therefore not absent from our tables but present as arms, a released drafter run as shipped being the unwindowed arm and a bounded draft cache carrying true indices the windowed-only arm, so each contrast moves one binary on one build (Appendices~\ref{app:related} and~\ref{app:baselines}, with the one alternative position mechanism we do not run).

\section{Method}
\label{sec:method}

\subsection{Setup and windowing}

Let the target be an arbitrary long-context model and the drafter $q$ a frozen short-context checkpoint, trained at length at most $L_{\mathrm{train}}$ and with rotary implementation reach $\Pmax$, required to propose $K$-token blocks at context $N\gg L_{\mathrm{train}}$.  No weight is modified anywhere, and Figure~\ref{fig:overview} gives the arrangement.
We state the method for \emph{target-conditioned} drafters, whose per-position input includes target-supplied hidden states, covering the {EAGLE} family and {DSpark}; we evaluate {EAGLE-3}~\citep{li2025eagle3} and two {DSpark} variants~\citep{inferact2026dspark,radixark2026dspark}, and a standalone drafter with no feature channel is outside what we test.
The drafter's visible attention cache holds only the most recent $W$ tokens, optionally plus $s$ sinks ($s=0$ throughout, justified in \secref{sec:ablations}), bounding its attention cost to a constant in $N$ while the target verifies every proposal against its complete context.  The drafter never runs its own attention over the prompt: at prefill its cache is written in one shot as a linear re-projection of the target's hidden states, normed and rotated at the assigned positions, so the re-basing of \secref{sec:rebasing} is applied at that write as well as on the query side during drafting.  Windowing therefore bounds how many positions the drafter attends over, not how much each position knows, so a window of features reaches further than a window of tokens.  We choose $W\lesssim L_{\mathrm{train}}$ to restore the visible attention length to the drafter's training range, which alone guarantees no agreement with a truncated-target pass; \secref{sec:gaps} measures the deployed fidelity.

\subsection{The position problem}
\label{sec:position}

With no distant sinks, windowing bounds visible \emph{offsets}, every $n-m$ satisfying $|n-m|\le W$.  What it does \textbf{not} bound are the \emph{indices}: at context length $N$ the drafter still evaluates $R(m)$ for $m\sim N$, and once $m\ge\Pmax$ the rotary transform is evaluated beyond the precomputed table or past where argument reduction $m\theta_i\bmod2\pi$ remains accurate.  The mathematical definition of {RoPE} is untouched; what fails is a particular implementation.  The failure is quiet, degrading acceptance and therefore speedup while a correct verifier leaves the target's intended output law unchanged, and it is invisible below $\Pmax$, which is why work at moderate context has not had to address it.  The failure has been reported independently, \citet{tan2025specpv} finding released {EAGLE-3} heads ineffective at long context because the decoder layer's {RoPE} does not extrapolate.
The naive remedy, renumbering the window to $0,\dots,W-1$ each step, fails structurally: a key is rotated once on entry and reused unrotated, so a sliding window changes every cached key's local index and forces a full re-rotation, the cost {StreamingLLM} pays.

\subsection{Rotary index re-basing}
\label{sec:rebasing}

An \emph{index band} is a block of $\Pmax$ consecutive position indices, $e=\lfloor m/\Pmax\rfloor$.
Additivity, $R(a)R(b)=R(a+b)$, and orthogonality, $R(m)^\top=R(-m)$, hold block-wise and hence for the full block-diagonal $R$ (Appendix~\ref{app:ropeproof}), so the attention score between a query at $m$ and a key at $n$ is
\begin{equation}
\label{eq:rope}
\langle R(m)\mathbf{q}_m, R(n)\mathbf{k}_n\rangle = \mathbf{q}_m^\top R(m)^\top R(n)\mathbf{k}_n \overset{\mathrm{orth.}}{=} \mathbf{q}_m^\top R(-m)R(n)\mathbf{k}_n \overset{\mathrm{add.}}{=} \mathbf{q}_m^\top R(n-m)\mathbf{k}_n,
\end{equation}
depending on $n-m$ \emph{alone}, so shifting both indices by a common constant changes nothing.
Re-basing is applied at \emph{index-assignment} time, before rotation: with $\tilde m=m\bmod\Pmax$ and $m=e\Pmax+\tilde m$, every index in a band is shifted by the same $-e\Pmax$ and $\tilde n-\tilde m=n-m$ there, so by~\eqref{eq:rope} the score is exactly unchanged while every index on which $R$ is evaluated lies in $[0,\Pmax)$ for arbitrarily large true position; each key enters already assigned $\tilde m$, is rotated once, and is never re-rotated.  Below $\Pmax$ the map is the identity, so the scheme is backward compatible, and it requires $W\le\Pmax$, since a visible cache wider than a band would hold distinct keys aliasing to one assigned index; $W\lesssim L_{\mathrm{train}}\le\Pmax$ in every configuration we run.  Its boundary scope is limited, since direct scores can differ when visible keys straddle a band boundary and downstream effects can persist through cached ancestry.  The distortion there is not a small perturbation but a \emph{positive} perceived offset of order $\Pmax$, a configuration causal masking never produces in training, so the drafter evaluates rotations outside the regime it saw; Appendix~\ref{app:ropeproof} derives this and bounds the positions not certified by exactness, conservatively and not as an observed discrepancy rate.

\begin{proposition}[Common-shift position invariance]
\label{prop:invariance}
Assume the drafter's per-position inputs are identical across the two executions, which admits externally supplied features such as target hidden states, together with position-independent tokenwise operations, deterministic exact-arithmetic inference, attention whose only position dependence is fixed-frequency {RoPE} and a translation-invariant mask, and the same token computation graph in both executions.
If every position in that graph is shifted by one common constant and both executions evaluate the same mathematical rotations, the output logits agree.
\end{proposition}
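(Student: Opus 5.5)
The plan is an induction over the token computation graph in a topological order, using the relative-position identity~\eqref{eq:rope} as the only nontrivial ingredient. Fix the two executions, with positions $m$ and $m+c$ for a common constant $c$, and let $G$ be their shared token computation graph. The graph's nodes are the per-position activations at each layer, and its edges are the tokenwise maps and the attention reads permitted by the mask. The claim to prove by induction on layer depth, and within a layer on position order, is that every pre-rotation activation agrees between the two executions. This covers the residual stream, the normed hidden states, and the unrotated $\mathbf{q}_m,\mathbf{k}_n,\mathbf{v}_n$. The base case is the hypothesis that per-position inputs coincide, target hidden states included, since these are supplied externally and carry no index.

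For the inductive step, split each layer into its tokenwise part and its attention part. The tokenwise part consists of norms, projections, MLPs and the concatenation with target features. These operations are position-independent by assumption, so equal inputs give equal outputs under deterministic exact arithmetic. The attention part is the only place a position enters, and there I would argue in three moves.

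\begin{enumerate}
\item The set of keys a query at $m$ may attend to is the same in both executions. The mask is translation-invariant, so it depends only on $n-m$, which the shift preserves. Because the graph is common, the same key nodes are present in both runs.
\item Each raw score is unchanged. By~\eqref{eq:rope}, which rests on block-wise additivity and orthogonality (Appendix~\ref{app:ropeproof}), we have $\langle R(m+c)\mathbf{q}_m, R(n+c)\mathbf{k}_n\rangle=\mathbf{q}_m^\top R(n-m)\mathbf{k}_n$. By the inductive hypothesis $\mathbf{q}_m$ and $\mathbf{k}_n$ coincide across executions, so the score coincides.
\item Softmax and the value-weighted sum are then identical. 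Values are not rotated under RoPE, and any cached keys were rotated once at their shifted index, which is consistent with the same identity.
\end{enumerate}

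The output projection is tokenwise, which closes the step. Applying the induction to the final layer and then the LM head gives equal logits.

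The main obstacle is making precise what ``both executions evaluate the same mathematical rotations'' must exclude, so that~\eqref{eq:rope} is genuinely applicable. Additivity holds for the exact rotation $R(\cdot)$. It does not necessarily hold for a table lookup or a reduced-argument implementation beyond $\Pmax$, and those are precisely the failures that re-basing is meant to avoid. I would therefore state explicitly that $R$ is the exact fixed-frequency matrix in both runs, with no position-dependent scaling (for example dynamic NTK or YaRN factors chosen by the current length), and that exact arithmetic removes rounding differences between $R(m)$ and $R(m+c)$. A second, smaller point is the KV cache: a key computed in an earlier step must have been assigned its shifted index at entry in both runs. This is guaranteed by the common-graph assumption together with the consistent shift, and it is exactly the band condition that re-basing satisfies when the graph lies in one band.
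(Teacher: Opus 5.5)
Your proposal is correct and is essentially the paper's own argument. It is a layer induction from identical per-position inputs, in which tokenwise operations preserve equality and attention is preserved because visible-key sets agree by translation invariance of the mask and scores agree by \eqref{eq:rope}; your caveats (cached keys created under the same shifted assignment, no length-dependent rotary scaling, exact arithmetic) are the same qualifications the paper attaches in Appendix~\ref{app:ropeproof}.
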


Layer induction gives this from~\eqref{eq:rope} and translation invariance of the mask.  If a prediction's dependency graph lies in one band, all its positions receive the same shift $-e\Pmax$ and re-basing agrees with ideal true-index {RoPE} (Corollary~\ref{cor:rebasing}); predictions whose visible keys or cached ancestors cross a boundary are not certified.

\section{Why This Works}
\label{sec:theory}

Windowing limits the drafter's computation while verification retains the full-context target's prediction rule, and we separate two questions: when re-basing preserves a computation, and how truncation and drafter fidelity control acceptance.  The acceptance results are distributional inequalities, not assumptions that language is local.

\label{sec:strand-one}
A cache of $W$ keys does not make a multilayer drafter a function of the last $W$ tokens, and for a target-conditioned drafter it is not meant to, so two dependencies separate.  \emph{Index} dependency, which positions' rotary indices enter the computation, stays local, bounded by $M=1+\sum_{\ell}r_\ell$ under purely local drafter attention (Proposition~\ref{prop:span}), and is the only notion the boundary bound of Appendix~\ref{app:ropeproof} uses; \emph{information} dependency, which tokens influence the output, spans the whole prefix by construction and measures 0.22--0.33 in total variation (\secref{sec:e2delta}).  The bounds below need neither, using the proposal actually produced.

\subsection{The truncation--fidelity decomposition}
\label{sec:decomposition}

Write $x_{<t}=(\mathbf d_t,\mathbf w_t)$ with $\mathbf w_t$ the last $\min\{W,t-1\}$ tokens and $p_t(\cdot):=p(\cdot\mid\mathbf d_t,\mathbf w_t)$; let $\ptrunct$ be the target distribution from a fresh pass on $\mathbf w_t$ alone under fixed mask, input, and position conventions, not a conditional average over distant prefixes; and let $q_t$ be the drafter's \emph{actual deployed} proposal, including its persistent cache, eviction, re-basing, and rollback state.  The triangle inequality gives
\begin{equation}
\label{eq:decomp}
\mathrm{TV}(p_t,q_t)
\le
\underbrace{\mathrm{TV}(p_t,\ptrunct)}_{\text{(A) truncation gap}}
+
\underbrace{\mathrm{TV}(\ptrunct,q_t)}_{\text{(B) drafter fidelity gap}},
\end{equation}
an upper bound rather than an additive identity; with~\eqref{eq:alpha} and the reverse triangle inequality it also gives $|\alpha_t-(1-\mathrm{TV}(p_t,\ptrunct))|\le\mathrm{TV}(\ptrunct,q_t)$.
\label{sec:decomp-buys}
Term (A) is measurable without a drafter, for a fixed target, comparator, and law, while term (B) measures agreement between that comparator and the implemented drafter; choosing $W\lesssim L_{\mathrm{train}}$ bounds neither, which is why we measure fidelity rather than assume it small.
Since the inequality holds for any intermediate distribution, the comparator is a modeling choice governed by one principle, that \emph{$\ptrunct$ is the target restricted to the drafter's information access}.  Fresh truncation is the instance matching a standalone drafter and the one we measure; the instance a target-conditioned drafter faces keeps the target's full-context keys and values at verified positions, so fresh truncation reads (A) conservatively (Appendix~\ref{app:extended}).

\subsection{The bound: sampling and greedy verification}
\label{sec:bound}

Let $\mu$ be any law on prediction states, including positions, prefixes, and the implementation state that determines the proposal, with all quantities in a bound using that same law, and define $\bar A:=\mathbb E_\mu[\mathrm{TV}(p_t,\ptrunct)]$ and $\delta:=\mathbb E_\mu[\mathrm{TV}(\ptrunct,q_t)]$; $\delta$ is a definition, not a small-error assumption, and neither stationarity nor independence is required.

\begin{proposition}[Sharp mean acceptance bounds]
\label{prop:bound}
For standard speculative rejection sampling and every $\mu$,
$[1-\bar A-\delta]_+\le\mathbb E_\mu[\alpha_t]\le1-|\bar A-\delta|$ with $[z]_+:=\max\{z,0\}$, and in particular $\left|\mathbb E_\mu[\alpha_t]-(1-\bar A)\right|\le\delta$.
For $|\mathcal V|\ge3$, both endpoints are attainable from the two means alone.
\end{proposition}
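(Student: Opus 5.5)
The proof has two parts: pointwise inequalities averaged over $\mu$, and explicit constructions showing the endpoints are attained. Write $a_t:=\mathrm{TV}(p_t,\ptrunct)$ and $b_t:=\mathrm{TV}(\ptrunct,q_t)$. By~\eqref{eq:alpha}, $\alpha_t=1-\mathrm{TV}(p_t,q_t)$. Since TV is a metric, the triangle and reverse triangle inequalities give
\begin{equation*}
|a_t-b_t|\le \mathrm{TV}(p_t,q_t)\le \min\{1,\,a_t+b_t\}.
\end{equation*}

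\textbf{Averaging.} Taking expectations under $\mu$ on the right gives $\mathbb E_\mu[\mathrm{TV}(p_t,q_t)]\le\bar A+\delta$. Combined with $\alpha_t\ge0$ pointwise, this yields $\mathbb E_\mu[\alpha_t]\ge[1-\bar A-\delta]_+$.

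On the left, Jensen applied to the convex map $z\mapsto|z|$ gives
\begin{equation*}
\mathbb E_\mu|a_t-b_t|\ge|\bar A-\delta|,
\end{equation*}
and hence $\mathbb E_\mu[\alpha_t]\le1-|\bar A-\delta|$.

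The final claim follows from the two bounds. We have $1-\bar A-\delta\le\mathbb E_\mu[\alpha_t]$ and $\mathbb E_\mu[\alpha_t]\le 1-|\bar A-\delta|\le1-\bar A+\delta$. Together these give $|\mathbb E_\mu[\alpha_t]-(1-\bar A)|\le\delta$. No independence or stationarity is used, because only linearity of expectation under a single law is invoked.

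\textbf{Sharpness.} Here I read "attainable from the two means alone" as follows: for any admissible pair $(\bar A,\delta)\in[0,1]^2$, there exist a $\mu$ and distributions attaining each endpoint. It suffices to use point-mass $\mu$ on a single state, so that $a=\bar A$ and $b=\delta$, and to build three distributions on three symbols $\{1,2,3\}$.

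For the upper endpoint, take $q=\ptrunct$ shifted toward $p$ colinearly, so that the reverse triangle inequality is tight. Assume $\bar A\ge\delta$. Let $\ptrunct=e_1$ and $p=(1-\bar A)e_1+\bar Ae_2$, and take $q=(1-\delta)e_1+\delta e_2$. Then $\mathrm{TV}(p,q)=\bar A-\delta$. The case $\delta>\bar A$ is handled the same way with $q=(1-\delta)e_1+\delta e_2$ and $p$ between them.

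For the lower endpoint, tightness of the triangle inequality requires $p$ and $q$ to move mass away from $\ptrunct$ onto disjoint symbols. This is where the third symbol is needed. Let $\ptrunct=e_1$, $p=(1-\bar A)e_1+\bar Ae_2$ and $q=(1-\delta)e_1+\delta e_3$. Then $\mathrm{TV}(p,q)=\tfrac12\bigl(|\delta-\bar A|+\bar A+\delta\bigr)=\max\{\bar A,\delta\}$, which is not $\bar A+\delta$.

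So a single point-mass state falls short when both gaps are large, and I expect this step to be the main obstacle. The fix is to use a point mass only when $\bar A+\delta\le1$, with $\ptrunct$ spread out. Take $\ptrunct=e_1$, $p=(1-\bar A)e_1+\bar A e_2$, and $q$ supported on $e_1,e_3$ with its removed mass taken from the part of $e_1$ that $p$ keeps. Then $\mathrm{TV}(p,q)=\bar A+\delta$, provided $\min_x\{p,q\}$ overlaps only on $e_1$ with mass $1-\bar A-\delta$.

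When $\bar A+\delta>1$, the target value is $0$. This is attained by a two-state mixture $\mu$ in which, on each state, $p$ and $q$ are disjoint while the per-state gaps average to $\bar A$ and $\delta$. Each state is realized on three symbols with $\ptrunct$ straddling $p$ and $q$.

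Checking these constructions is the only real work. For $|\mathcal V|=2$, the disjoint-displacement construction fails, which explains the hypothesis $|\mathcal V|\ge3$.
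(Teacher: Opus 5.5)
The inequality half of your proof is correct and is the paper's argument: pointwise triangle and reverse triangle inequalities, clipped at zero, then averaged, with convexity of $|\cdot|$ on the upper side. Your upper-endpoint construction is also exactly the paper's. It needs no case split, since $\ptrunct=e_1$, $p=(1-\bar A)e_1+\bar Ae_2$, $q=(1-\delta)e_1+\delta e_2$ gives $\mathrm{TV}(p,q)=|\bar A-\delta|$ either way.

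The gap is the lower endpoint, which you never actually construct.

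\textbf{The case $\bar A+\delta\le1$.} Your fix keeps $\ptrunct=e_1$, and with that choice no $p,q$ can work. The two gaps force $p(1)=1-\bar A$ and $q(1)=1-\delta$. So $p$ and $q$ share at least $\min\{1-\bar A,1-\delta\}$ on $e_1$, and $\mathrm{TV}(p,q)\le\max\{\bar A,\delta\}$, the same value your first attempt gave. The overlap $1-\bar A-\delta$ you require is impossible unless one gap vanishes. Equality in the $\ell_1$ triangle inequality needs $\ptrunct(x)$ to lie between $p(x)$ and $q(x)$ at every coordinate, so the point mass belongs at an endpoint, not at $\ptrunct$. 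The paper takes $p=e_1$, $\ptrunct=(1-\bar A)e_1+\bar Ae_2$, $q=(1-\bar A-\delta)e_1+(\bar A+\delta)e_2$. This is a two-symbol colinear configuration with $\alpha=1-\bar A-\delta$.

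\textbf{The case $\bar A+\delta>1$.} Your two-state mixture is asserted rather than exhibited. Each of its states would itself need a $\ptrunct$ at prescribed distances from disjoint $p,q$, which is the entire difficulty. One state suffices: $p=e_1$, $q=e_2$, $\ptrunct=(1-\bar A,\,1-\delta,\,\bar A+\delta-1)$ has gaps $\bar A,\delta$ and $\alpha=0$.

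This case, not a disjoint-displacement obstruction, is where $|\mathcal V|\ge3$ is needed. With two symbols, $\alpha_t=0$ forces $\mathrm{TV}(p_t,\ptrunct)+\mathrm{TV}(\ptrunct,q_t)=1$ pointwise. So the mean endpoint $0$ is unreachable once $\bar A+\delta>1$.
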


Both follow pointwise from the triangle inequality and its reverse form, clipped and averaged; Appendix~\ref{app:sharpness} attains each endpoint with three-token laws, and Appendix~\ref{app:joint} gives strictly stronger joint forms and a threshold variant (Corollary~\ref{cor:threshold}) that our measurements show is always the weaker screen.
The reported decoder uses greedy verification, for which softmax {TV} is a diagnostic rather than observed acceptance: applied to point masses at the tokens chosen by the full target, the truncated target, and the deployed drafter, the same argument bounds greedy agreement by $[1-\rho_{\mathrm g}-\delta_{\mathrm g}]_+$ and $1-|\rho_{\mathrm g}-\delta_{\mathrm g}|$ for flip rates defined by disagreement between successive pairs (Propositions~\ref{prop:greedy} and~\ref{prop:margin}, Appendix~\ref{app:greedy}).  The lower bound makes the key point explicit: the two error sources can cancel as well as add, so a large truncation gap does not by itself force rejection.

\subsection{Context length and window selection}
\label{sec:sizing}
\label{sec:failures}

Different context lengths are governed by different laws $\mu_N$, and Proposition~\ref{prop:length} bounds the drift between two lengths by $|\bar A_N-\bar A_{N'}|+\delta_N+\delta_{N'}$, so constant mean gaps do not imply constant acceptance and choosing $W$ needs both gaps and decoding cost on a calibration grid.  Per-position bounds outside a sparse exceptional set carry over to prefix averages, and the bounds lift to whole rounds without assuming independent acceptances (Propositions~\ref{prop:finite} and~\ref{prop:block}, Appendices~\ref{app:finite} and~\ref{app:blockproof}).  The claim is therefore conditional and testable, and Appendix~\ref{app:extended} lists the correctness conditions and the five measurements that would explain an implementation rather than bound it.

\section{Experiments}
\label{sec:experiments}

No weight is updated anywhere in any run.  A {Kimi-K3} target~\citep{kimiteam2026k3} (2.8T parameters, 104B active, 93 layers, tensor parallel~8) verifies against its complete prefix while a released, frozen short-context drafter proposes $K=3$ tokens through a bounded window with re-based rotary indices; {Qwen2.5-7B-Instruct-1M}~\citep{qwen2025qwen1m} with its {EAGLE-3} head~\citep{li2025eagle3} carries the ablations and the only direct gap measurements, and {MiniMax-M3}~\citep{lai2026msa} with {EAGLE-3.1}~\citep{eldenk2026attndrift} a third ladder.  Two axes of generality are deliberate, three target--drafter pairs and five corpora (Appendix~\ref{app:datasets}), so the result can be a property of neither {Kimi-K3}, {DSpark}, nor {PG19}~\citep{rae2020pg19} prose.  Both mechanisms are a patch to {vLLM}~\citep{kwon2023vllm}, and every measurement is taken in that engine.
Three evaluation laws appear below and \secref{sec:theory} forbids substituting a rate measured under one into a bound stated under another: \emph{deployment}, reached proposal states in an actual run; \emph{corpus-position}, teacher-forced positions on a stride from held-out documents; and \emph{fresh-drafter}, corpus-position with the drafter run as a clean forward pass.  Timing and acceptance use the first, the gaps of \secref{sec:gaps} the third.

\subsection{Setup and measurement controls}
\label{sec:setup}

Every run is batch size one, greedy, with decode-phase timing from the engine's per-phase counters so prefill is excluded, and speedup computed per sample against a paired target-only run on the identical prompt; throughout, {MAL} is \emph{emitted} tokens per round, accepted drafts plus one.
Because each comparison is a ratio between two separately booted engine instances, anything taxing one arm more manufactures or destroys the effect; four controls are load-bearing (Appendix~\ref{app:controls}), the sharpest being symmetric compiled-graph capture, one arm compiled against the other eager moving a ratio from $0.38\times$ to $2.21\times$.

\subsection{Windowed drafting to two million tokens}
\label{sec:e4}

\begin{table}[t]
\caption{Three arms on {Kimi-K3}, {PG19}, $K=3$, $W=4096$, $n=5$ per cell.  $S$ is decode-phase speedup over a paired target-only run; {MAL} is emitted tokens per round out of $K+1=4$.  Only the drafter's view of context and position varies, and bold marks our arm rather than the winning cell.  Sample standard deviations of $S$ are $0.26$--$0.66$ for profitable cells and $0.02$--$0.10$ for collapsed ones, so individual cells should not be read finely: another arm is nominally ahead in five of them, {Inferact} at 16k and 32k and {RadixArk} at 8k, 128k, and 1M, each by less than that spread, and we read none of the five as a finding.  The claim is the retention row, each arm's 2M speedup as a fraction of its own speedup at 8k.}
\label{tab:e4}
\begin{center}
\scriptsize
\begin{tabular}{rrrrrrr}
\toprule
& \multicolumn{2}{c}{unwindowed} & \multicolumn{2}{c}{windowed only} & \multicolumn{2}{c}{windowed ${}+{}$ re-based (ours)}\\
\cmidrule(lr){2-3}\cmidrule(lr){4-5}\cmidrule(lr){6-7}
$N$ & $S$ & {MAL} & $S$ & {MAL} & $S$ & {MAL}\\
\midrule
\multicolumn{7}{l}{\emph{{Inferact}}, five {MLA} draft layers, trained at 16{,}384}\\
8k   & 2.03 & 2.63 & 2.20 & 2.87 & \textbf{2.51} & 3.29\\
16k  & 2.39 & 3.12 & 2.33 & 3.04 & \textbf{2.36} & 3.08\\
32k  & 2.29 & 3.01 & 1.90 & 2.48 & \textbf{2.10} & 2.73\\
128k & 1.29 & 1.74 & 2.31 & 3.02 & \textbf{2.41} & 3.15\\
512k & 0.79 & 1.17 & 2.60 & 3.37 & \textbf{2.46} & 3.18\\
1M   & 0.66 & 1.10 & 2.24 & 2.88 & \textbf{2.55} & 3.28\\
2M   & 0.56 & 1.07 & 0.88 & 1.13 & \textbf{2.25} & 2.89\\
\midrule
\multicolumn{7}{l}{\emph{{RadixArk}}, five {GQA} draft layers, trained at 4096}\\
8k   & 1.79 & 2.30 & 1.97 & 2.50 & \textbf{1.75} & 2.24\\
16k  & 1.57 & 2.10 & 1.79 & 2.28 & \textbf{2.03} & 2.59\\
32k  & 1.63 & 2.33 & 2.05 & 2.62 & \textbf{2.09} & 2.66\\
128k & 1.08 & 2.14 & 1.90 & 2.42 & \textbf{1.63} & 2.08\\
512k & 0.55 & 2.18 & 1.95 & 2.46 & \textbf{1.91} & 2.42\\
1M   & 0.32 & 1.97 & 2.01 & 2.54 & \textbf{1.63} & 2.06\\
2M   & 0.14 & 1.35 & 0.87 & 1.11 & \textbf{2.08} & 2.63\\
\midrule
\multicolumn{7}{l}{\emph{retained at 2M as a fraction of the same arm at 8k}}\\
{Inferact} & \multicolumn{2}{r}{$0.28$} & \multicolumn{2}{r}{$0.40$} & \multicolumn{2}{r}{$\mathbf{0.90}$}\\
{RadixArk} & \multicolumn{2}{r}{$0.08$} & \multicolumn{2}{r}{$0.44$} & \multicolumn{2}{r}{$\mathbf{1.19}$}\\
\bottomrule
\end{tabular}
\end{center}
\end{table}

The controlled comparison holds the target, prompts, block size $K$, and sample set fixed, varying only how the drafter sees context and position.  Two released {DSpark} drafters run against the same {Kimi-K3} target and differ in draft-layer attention, {Inferact}~\citep{inferact2026dspark} with five {MLA}~\citep{deepseek2024v2} layers trained at 16{,}384 and {RadixArk}~\citep{radixark2026dspark} with five {GQA}~\citep{ainslie2023gqa} layers trained at 4096; both read target hidden states from five layers of 93 and neither runs without the target's activations.  Workload is {PG19}, {EOS} ignored, 4096 generated tokens, seven tiers from 8k to 2M, $n=5$ per cell, $W=4096$, $s=0$, one {MI355X} node.
Three arms per drafter separate the two mechanisms of \secref{sec:method}: \emph{unwindowed}, the drafter attending over the whole prefix; \emph{windowed only}, its visible cache bounded at $W$ but its rotary indices left as the target supplies them; and \emph{windowed ${}+{}$ re-based}, our method, which additionally maps indices into $[0,\Pmax)$.  All three run on one engine build, and the last two differ by a single environment flag, so binary, hardware, prompts, and drafter are identical across them.

Table~\ref{tab:e4} gives the result, plotted as Figure~\ref{fig:e4}, and the claim is about flatness rather than any particular multiple: we trained neither drafter and claim nothing about drafter quality, only that whatever a released drafter is worth at short context it stays worth roughly that at $250\times$ the length.
Windowed and re-based, {Inferact} retains 90\% of its 8k speedup at 2M ($2.51\times$ to $2.25\times$) and {RadixArk} retains 119\% ($1.75\times$ to $2.08\times$).  Neither ratio is a trend in length, and the one above unity least of all: {RadixArk}'s 8k denominator is the lowest of its three arms at that tier, so a retention above one reflects a weak reference cell rather than a gain from context, and what the two numbers support is the flat claim only, that the value at 2M is the value at 8k to within the ladder's own cell-to-cell spread.  {Inferact} sits above {RadixArk} at every tier, by $0.01$ to $0.92\times$, the level belonging to the drafter while the flatness belongs to the mechanism.

Both other arms lose that flatness, for different reasons, which is the table's most diagnostic feature.  Unwindowed drafting decays continuously, keeping 28\% and 8\% of its 8k speedup at 2M and crossing break-even between 128k and 512k and its primary failure is cost rather than acceptance, unwindowed {RadixArk} still reaching {MAL} 2.18 at 512k and 1.97 at 1M while running at $0.55\times$ and $0.32\times$, each proposal having merely become too expensive to be worth making.  The ratio $S/\mathrm{MAL}$ isolates this, sitting at $0.76$--$0.79$ in every windowed cell, the constant of a round dominated by one full-context verification, and falling to 0.52 and 0.10 unwindowed at 2M.  {Inferact} unwindowed pays twice, shedding acceptance as well from 128k ({MAL} 3.01 at 32k against 1.74), as one expects of a drafter trained at 16{,}384.

Windowing alone removes the cost failure, and below $\Pmax$ it \emph{is} our method: the two windowed arms differ by at most $0.38\times$ at every tier through 1M, which does not exceed the $0.26$--$0.66$ per-cell standard deviation, and the sign is inconsistent, so we read them as one arm measured twice, which is what the theory requires of a map provably the identity there (Corollary~\ref{cor:rebasing}, quantified in Appendix~\ref{app:rebase}).
Past $\Pmax$ it collapses.  Both drafters declare a maximum position of exactly $\Pmax=1{,}048{,}576$, reached by {YaRN}~\citep{peng2024yarn} scaling of a shorter native range, so 1M is the last tier inside their rotary reach and 2M the first outside it, while the {NoPE}~\citep{kazemnejad2023nope} target serves past that point and still hands them true absolute positions (Appendix~\ref{app:rebase}).  At 2M, windowing alone gives {MAL} 1.13 and 1.11, so only the bonus token is emitted, at $0.88\times$ and $0.87\times$, with $S/\mathrm{MAL}$ still 0.78, so drafting cost is untouched and this is purely an acceptance failure, and the per-cell spreads quoted with Table~\ref{tab:e4} put both failures beyond sampling noise.  The two mechanisms are therefore not substitutes, and the shape is a cliff at $\Pmax$ rather than a decay, which is what \secref{sec:position} predicts for an out-of-range rotary access.

\subsection{Beyond {Kimi-K3} and {DSpark}}
\label{sec:e1}

Everything above uses one target and one drafter family, so the standing objection is that flatness is a property of {Kimi-K3} or of how {DSpark} drafters are built.  Two further pairs answer it, each differing from the main ladder in a way that could have broken the method.
The same comparison on {Qwen2.5-7B-Instruct-1M} with its {EAGLE-3} head, at $W=2048$ and $K=3$ over ten {PG19} context lengths, reproduces the pattern at smaller scale (Table~\ref{tab:e1} and its qualifications, Appendix~\ref{app:tables}): windowed {MAL} is flat at 1.61--1.81 from 2k to 512k with speedup $1.31$--$1.83\times$, against {MAL} 1.23--1.50 and $1.11$--$1.22\times$ unwindowed, which pays the target's full context every draft step.  Read as \secref{sec:e4} reads it, the windowed arm retains $1.22$ of its 2k speedup at 512k against $0.91$ unwindowed, on a far smaller target and a different drafter family.  We stop at 512k because this pair's 1M target denominator used a different checkpoint variant.

A third pair, {MiniMax-M3} with a released {EAGLE-3.1} head at $W=4096$ and $K=3$, is the sharper test: this target's attention is block-sparse and it loses only 26\% of its decode rate over $250\times$ of context, so a windowed drafter cannot win by outliving a collapsing baseline.  It wins anyway (Table~\ref{tab:m3}, Appendix~\ref{app:more}): over nine tiers from 4k to 1M the windowed arm spans $2.08$--$2.62\times$ and retains $1.11$ of its 4k speedup at 1M, while the unwindowed arm falls from $2.38\times$ to $1.75\times$ and retains $0.74$, the difference at 1M carried by acceptance, {MAL} 3.32 against 2.49.  Flatness therefore holds on all three pairs, here against a target that does not itself collapse, which is the case the claim has to survive; this pair isolates windowing alone, its drafter's rotary reach exceeding the deepest position served.
Speed is also not bought with retrieval, the windowed arm answering 120 of 120 {RULER}~\citep{hsieh2024ruler} single-needle tasks from 8k to 1M exactly as the target does, where the answer hangs on one short span the window excludes (Appendix~\ref{app:niah}).

\subsection{Window and sink ablations}
\label{sec:ablations}

Varying $W$ with all else fixed, windowing's benefit appears at 4k and grows monotonically to {MAL} 1.67 against 1.26 unwindowed at 32k, the crossover of Table~\ref{tab:e4} at another scale, while mean {MAL} at $W=1024$ and $W=2048$ differs by at most 0.02, so returns are saturated by 1024 on this drafter without establishing equivalence (Table~\ref{tab:e6}, Appendix~\ref{app:tables}).  Both findings recur at 1M on {Kimi-K3}: acceptance is flat across a $4\times$ window range (Appendix~\ref{app:winabl}), and a sink sweep at that length justifies $s=0$ on {Inferact}, though not on {RadixArk}, which responds in the opposite direction (Appendix~\ref{app:sinkabl}).

\subsection{Measuring the two gaps}
\label{sec:gaps}
\label{sec:e2pilot}
\label{sec:e2delta}

Term (A) needs no drafter: on {Qwen2.5-7B-Instruct} at 32k we compare the target's full-prefix distribution $p_t$ against a fresh pass on the last $W$ tokens only, over a pre-registered grid of 34 windows at 1750 positions in 14 {SPEED-Bench} documents (Appendix~\ref{app:grid}).  The mean gap $\bar A$ falls from 0.2195 at $W=128$ to 0.0444 at $W=16384$, the greedy flip rate from 0.247 to 0.063, and at the deployed $W=2048$ truncation changes the target's greedy token at 12.1\% of positions.
For term (B) we reimplemented the {EAGLE-3} drafter so its output could be compared against that comparator at matched positions and windows, under the \textbf{fresh-drafter law}: it runs as a clean forward pass, not from a deployed cache, so these numbers characterize the checkpoint rather than the shipped system.  Table~\ref{tab:gaps} in Appendix~\ref{app:gapdetail} reports both gaps at matched positions, where the bound of Proposition~\ref{prop:bound} also proves informative but loose, and says two further things.

\textbf{Fidelity is large and flat}, $\delta\approx0.68$ across a $64\times$ window range, so acceptance here is drafter-limited rather than window-limited: the truncation gap falling by 0.159 over the sweep moves measured acceptance by only 0.019, the drafter being already wrong in the directions truncation would hurt, so the two errors overlap rather than add.  This is why we refuse to call term (B) small, and it is consistent with the deployment-law saturation at $W\ge1024$ in \secref{sec:ablations} without establishing its cause.
\textbf{The feature channel is large}, 0.22--0.33 in {TV} between a drafter fed full-pass and one fed truncated-pass target states, so the drafter is not a function of its window, which is how a 1024-token head serves a 2M-token target.
\section{Limitations}
\label{sec:limitations}

The ladder results are single-stream, greedy, on one platform, with five samples per cell; batched serving is spot-checked on a single pair, where the ordering of \secref{sec:e4} holds at long context but not at short (Appendix~\ref{app:batch}).
Re-basing is exercised only on the pair whose drafters have a rotary reach the served context crosses, so the cliff rests on that pair, and because the speedups and the gap measurements come from \emph{different} model pairs we do not claim the gaps explain the speedups.
Profitability is conditional rather than universal, requiring the speculative cost per emitted token to stay below the matched target-only cost, which expensive verification can prevent (Appendix~\ref{app:more}).
Each gap is measured under one evaluation law and one comparator, which \secref{sec:bound} forbids substituting into one another's bounds, and neither is monotone in $W$ by construction.
Output equivalence is only partly measured and we make no formal lossless-decoding claim, the residual differences we do observe tracking engine nondeterminism rather than speculation; neither mechanism touches the verifier, so whatever output-equivalence property the implementation has is inherited and every committed token is still chosen by the target's own full-context verification.
The theory assumes fixed-frequency {RoPE} with translation-invariant masking, leaving recurrence, learned absolute positions, and finite precision for separate treatment; Appendix~\ref{app:remaining} details the outstanding controls and the theory's scope.

\subsection*{AI use statement}

In this work, we used generative {AI} tools to proof-read the text of the paper and to assist with writing it.
We have reviewed all {AI}-assisted text, citations, and mathematical transcription.
We take responsibility for the final content of this work, including text, claims or artifacts produced with the aid of generative {AI}.

\subsection*{Reproducibility statement}

\Secref{sec:method} specifies attention masking and rotary index re-basing and requires no inference-time weight updates; \secref{sec:theory} fixes the evaluation law, comparator, proposal, bounds, and emitted-length convention, with Appendix~\ref{app:ropeproof} giving the architectural scope of re-basing exactness; and \secref{sec:experiments} names the evaluation law for every reported quantity, lists the four measurement controls, and records the pinned engine build under which the length ladder was revalidated.
Model checkpoints, prompt manifests, per-sample logs, the truncation and fidelity sweeps, and the {vLLM} patch implementing both mechanisms will accompany the paper.

\bibliography{arxiv}
\bibliographystyle{arxiv}

\appendix
\section{Proofs and Supplementary Guarantees}

\subsection{{RoPE}, cached dependency spans, and band boundaries}
\label{app:ropeproof}

\textbf{Rotation identities.}
For coordinate pair $i$, fixed-frequency {RoPE} uses
\[
R_i(m)=
\begin{pmatrix}
\cos(m\theta_i)&-\sin(m\theta_i)\\
\sin(m\theta_i)&\cos(m\theta_i)
\end{pmatrix}.
\]
The angle-addition identities give $R_i(a)R_i(b)=R_i(a+b)$, while transposition changes the sine signs and gives $R_i(m)^\top=R_i(-m)$.
Since $R(m)$ is block diagonal, both identities hold for $R$ and imply \eqref{eq:rope}.

\textbf{From scores to outputs.}
Under Proposition~\ref{prop:invariance}'s assumptions, shift all positions in the complete token computation graph by one constant.
Layer-zero states agree.
If lower-layer states agree, their unrotated queries, keys, and values agree; relative offsets and visible-key sets also agree, so all attention scores, weights, and value sums agree.
Tokenwise operations and residual connections preserve equality, completing the layer induction.
A {KV} cache stores earlier states, so the same hypotheses must hold for the computations that created those states, not only for the current query.
Externally supplied features are permitted because the target computes them once, independently of the drafter's index assignment, so they are identical in both executions; the proof does not apply unchanged to learned absolute embeddings, features that themselves depend on the drafter's index map, context-dependent rotary frequencies, or finite-precision equality.

\begin{corollary}[Within-band re-basing]
\label{cor:rebasing}
Compare modulo re-basing with ideal mathematical {RoPE} at true indices under Proposition~\ref{prop:invariance}'s hypotheses.
If a prediction's entire dependency graph lies within one index band, all its positions receive the same shift $-e\Pmax$, so the outputs agree.
This does not certify predictions whose visible keys or cached ancestors cross a boundary.
\end{corollary}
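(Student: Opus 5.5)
The plan is to reduce Corollary~\ref{cor:rebasing} to Proposition~\ref{prop:invariance} by exhibiting re-basing, restricted to the prediction's dependency graph, as a common shift. Fix the prediction and let $G$ be its full token computation graph: the current query, every visible key, and recursively every cached ancestor whose stored key or value state enters the computation, including the prefill write of re-projected target features. By hypothesis there is a single band index $e$ with $\lfloor n/\Pmax\rfloor=e$ for every position $n$ appearing in $G$. Writing $n=e\Pmax+\tilde n$, the re-based index is $\tilde n=n\bmod\Pmax=n-e\Pmax$. Re-basing therefore acts on $G$ as the translation $n\mapsto n-e\Pmax$ with one constant, rather than as a piecewise map.

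Next I check the remaining hypotheses of Proposition~\ref{prop:invariance} for the two executions. The first is ideal {RoPE} at true indices $n$; the second is the deployed re-based execution at indices $\tilde n$. Per-position inputs agree, since the tokens are the same and the target hidden states are computed by the target independently of the drafter's index map. The token computation graph is the same in both: the visible set is defined by the window and the causal mask on true order, and because the shift is common, membership and offsets are preserved, with $\tilde n-\tilde m=n-m$. The rotations evaluated are the same mathematical $R(\cdot)$. This is where the corollary compares with \emph{ideal} {RoPE}, so the out-of-table implementation issue of \secref{sec:position} does not enter, and the re-based indices lie in $[0,\Pmax)$ in any case. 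Two further points need a brief remark. First, the map is injective on $G$ because $W\le\Pmax$ and all positions lie in one band, so no two keys alias. Second, each key is rotated once at entry with its assigned index and never re-rotated, so the cached state matches what the shifted execution would have produced at creation time. With these in place, Proposition~\ref{prop:invariance} gives equal logits.

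The main obstacle is the cache clause: the appendix's layer induction notes that the hypotheses must hold for the computations that created cached states, not only for the current query. I would handle this by defining $G$ as closed under cached ancestry and inducting over creation order. Every stored key or value was produced by a computation whose own graph is a subgraph of $G$, hence in band $e$ and shifted by the same constant, so it agrees across executions. The final sentence of the corollary, that straddling predictions are not certified, needs no proof. I would only observe that if two positions of $G$ lie in different bands they receive different shifts, so the offset identity fails, as in the positive-offset example of \secref{sec:rebasing}, and the proposition does not apply.
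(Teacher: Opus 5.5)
Your proposal is correct and follows the paper's route. Within one band, re-basing is the single translation $n\mapsto n-e\Pmax$, so Proposition~\ref{prop:invariance} gives equal logits, with its layer induction extended to the computations that created cached states and with target features admitted because they do not depend on the drafter's index map; one small point is that injectivity on the graph already follows from the single-band hypothesis, so you need not invoke $W\le\Pmax$ there.
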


\begin{proposition}[Local dependency span]
\label{prop:span}
Consider an $L$-layer causal Transformer with token-only input embeddings and tokenwise non-attention operations.
At layer $\ell$, attention at token position $n$ reads lower-layer states only at positions $n-r_\ell,\ldots,n$, restricted to the available prefix.
Assume the same rules during prefill and decoding, with no recurrence, global attention, pinned distant token, or external state.
Put
\[
R:=\sum_{\ell=1}^{L}r_\ell,\qquad M:=R+1.
\]
The state at token $n$ depends only on $x_{\max(1,n-R):n}$.
A prediction for token $t$ read from token $t-1$ therefore depends on at most the last $M$ tokens, including with a persistent {KV} cache.
\end{proposition}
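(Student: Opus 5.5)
We argue by induction on the layer index $\ell$, proving the stronger claim that the layer-$\ell$ state at token $n$ depends only on $x_{\max(1,n-R_\ell):n}$, where $R_\ell:=\sum_{j\le\ell}r_j$ and $R_0:=0$. The base case is layer zero. Because input embeddings are token-only, the state at $n$ is a function of $x_n$ alone, which gives $R_0=0$.

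For the inductive step, the layer-$\ell$ state at $n$ is produced in two stages. First, attention reads lower-layer states at positions $n-r_\ell,\dots,n$ (clipped to the prefix). By hypothesis, each such state at position $j$ depends only on $x_{\max(1,j-R_{\ell-1}):j}$. Second, tokenwise operations and residual connections act on the attention output together with the lower state at $n$. These operations add no further positions. Taking the union over $j\in[n-r_\ell,n]$ gives dependence on $x_{\max(1,n-r_\ell-R_{\ell-1}):n}=x_{\max(1,n-R_\ell):n}$. Clipping to $1$ is monotone, so the $\max$ composes correctly. The exclusions of recurrence, global attention, pinned distant tokens and external state are exactly what guarantees that the attention inputs are only those listed lower-layer states. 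Setting $\ell=L$ gives the first conclusion with $R=R_L$. The prediction for $t$ is read from the final state at $t-1$. It therefore depends on $x_{\max(1,t-1-R):t-1}$, which is at most $R+1=M$ tokens.

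The step needing care, and the expected obstacle, is the persistent KV cache. The worry is that a cached key or value at position $j$ was computed at an earlier step, during prefill or an earlier decode round, and so might carry information beyond the window currently in force. We handle this by noting that a cached entry is exactly the lower-layer state at $j$, or a tokenwise projection of it. It was computed by the same rule, since prefill and decoding are assumed to use identical rules. It is therefore the same function of tokens as a fresh recomputation. The inductive hypothesis, which is a statement about the function computed at each position, then applies to it regardless of when it was computed. Since causal masking means the state at $j$ never depends on tokens after $j$, caching cannot enlarge the dependency set. With that identification, the induction goes through unchanged and the bound holds with a persistent cache.
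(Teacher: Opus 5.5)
Your proof is correct and follows the paper's own argument: induction over layers with cumulative radius $\sum_{k\le\ell} r_k$, the union of lower-layer dependency intervals over $j\in[n-r_\ell,n]$, and the observation that cached states are the same states produced by the same rule, so caching does not change their dependencies. Your explicit check that the clipping at $1$ composes correctly is a small detail the paper leaves implicit.
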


\textbf{Two dependencies.}
For a target-conditioned drafter the token-only hypothesis fails and the information dependency is the whole prefix by design.
What the boundary bound~\eqref{eq:boundary} needs is the weaker \emph{index} dependency, the set of positions whose rotary indices enter the computation, and that remains bounded by $M$ whenever the drafter's own attention is local, because the target supplies a feature at a position without contributing any index of the drafter's assignment.
Proposition~\ref{prop:span} should be read with $M$ in that sense; its stronger token-level reading additionally requires token-only inputs.

\begin{proof}
Layer zero at $n$ depends only on $x_n$.
If a layer-$(\ell-1)$ state at $j$ depends only on
$x_{\max(1,j-\sum_{k<\ell}r_k):j}$, the union over
$j\in[n-r_\ell,n]$ lies inside
$[n-\sum_{k\le\ell}r_k,n]$.
Induction proves the claim; caching these same states does not alter their dependencies.
\end{proof}

The implementation's window convention determines $r_\ell$.
If a layer sees itself and $W-1$ predecessors, then $r_\ell=W-1$ and $M=1+L(W-1)$.
A global prefill is not covered merely because later decoding uses a local cache.

\textbf{A sufficient exceptional set.}
Compare modulo re-basing with ideal mathematical {RoPE} at true indices using the same mask, cache rules, and token sequence.
If every token in a prediction's dependency graph lies in one band, Proposition~\ref{prop:invariance} gives equal outputs.
Let $S$ be the set of predictions whose $M$-token dependency interval crosses a band boundary.
For any $N$ consecutive prediction positions, its covered fraction obeys
\begin{equation}
\label{eq:boundary}
\beta_N\le\min\left\{1,\frac{M}{\Pmax}+\frac{2M}{N}\right\}.
\end{equation}
For $M<\Pmax$, at most $N/\Pmax+2$ boundary intervals can intersect a run of length $N$, and each marks at most $M$ predictions; for $M\ge\Pmax$ the clipped bound is one.
This is a conservative bound on positions not certified by exactness, not on discrepancy magnitude or observed errors.
For example, $L=32$, $W\approx1$k, and $\Pmax\approx128$k give a long-run term near 25\%, while the drafters of \secref{sec:e4}, with $L=5$, $W=4096$, and $\Pmax=1{,}048{,}576$, give $M=20{,}476$ and a long-run term near 2\%.
A direct-edge fraction near $W/\Pmax$ (about 0.8\%) does not bound downstream output effects.

\subsection{Sharpness of the sampling mean bounds}
\label{app:sharpness}

Fix arbitrary $a,b\in[0,1]$ and use three vocabulary elements.
If $a+b\le1$, set
\[
p_t=(1,0,0),\quad
\ptrunct=(1-a,a,0),\quad
q_t=(1-a-b,a+b,0).
\]
The adjacent {TV} gaps are $a,b$ and $\alpha_t=1-a-b$.
If $a+b\ge1$, instead set
\[
p_t=(1,0,0),\quad q_t=(0,1,0),\quad
\ptrunct=(1-a,1-b,a+b-1),
\]
which has the same adjacent gaps and $\alpha_t=0$.
Thus the lower endpoint of Proposition~\ref{prop:bound} is always attained.
For the upper endpoint use
\[
\ptrunct=(1,0,0),\quad p_t=(1-a,a,0),\quad q_t=(1-b,b,0),
\]
for which $\mathrm{TV}(p_t,q_t)=|a-b|$.
This proves sharpness over arbitrary distributions and state laws; it does not assert that one trained model realizes every construction.

\subsection{Joint forms and acceptance drift}
\label{app:joint}

Write $A_t:=\mathrm{TV}(p_t,\ptrunct)$ and $B_t:=\mathrm{TV}(\ptrunct,q_t)$.
Retaining the joint law of the two gaps rather than only their means gives
\[
\mathbb E_\mu\bigl[(1-A_t-B_t)_+\bigr]
\le \mathbb E_\mu[\alpha_t]
\le1-\mathbb E_\mu\bigl[|A_t-B_t|\bigr],
\]
which implies Proposition~\ref{prop:bound} by convexity of $[\,\cdot\,]_+$ and of the absolute value, and is strictly stronger whenever the two gaps are not almost surely ordered the same way.

\begin{proposition}[Uniform guarantees and acceptance drift]
\label{prop:length}
If $\bar A_N\le a$ and $\delta_N\le b$ for every tested $N$, then $\inf_N\mathbb E_{\mu_N}[\alpha_t]\ge[1-a-b]_+$.
For any two lengths,
\[
\left|\mathbb E_{\mu_N}[\alpha_t]-\mathbb E_{\mu_{N'}}[\alpha_t]\right|
\le|\bar A_N-\bar A_{N'}|+\delta_N+\delta_{N'}.
\]
The same statements hold for greedy verification after substituting $\alpha^{\mathrm g}$, $\rho_{\mathrm g}(W,0)$, and $\delta_{\mathrm g}$.
\end{proposition}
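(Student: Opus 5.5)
The plan is to derive both claims of Proposition~\ref{prop:length} from Proposition~\ref{prop:bound}, which holds for every law $\mu$ and therefore separately for each $\mu_N$.

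For the uniform guarantee, apply the lower endpoint of Proposition~\ref{prop:bound} at each tested $N$: $\mathbb E_{\mu_N}[\alpha_t]\ge[1-\bar A_N-\delta_N]_+$. The map $(x,y)\mapsto[1-x-y]_+$ is nonincreasing in each argument, so the hypotheses $\bar A_N\le a$ and $\delta_N\le b$ give $[1-\bar A_N-\delta_N]_+\ge[1-a-b]_+$. This right-hand side does not depend on $N$, so taking the infimum over tested $N$ gives the claim.

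For the drift bound, use the centred form of Proposition~\ref{prop:bound}, $\bigl|\mathbb E_{\mu_N}[\alpha_t]-(1-\bar A_N)\bigr|\le\delta_N$, together with its analogue at $N'$. Write
\[
\mathbb E_{\mu_N}[\alpha_t]-\mathbb E_{\mu_{N'}}[\alpha_t]=\bigl(\mathbb E_{\mu_N}[\alpha_t]-(1-\bar A_N)\bigr)+\bigl(\bar A_{N'}-\bar A_N\bigr)-\bigl(\mathbb E_{\mu_{N'}}[\alpha_t]-(1-\bar A_{N'})\bigr).
\]
The triangle inequality then bounds the absolute value by $\delta_N+|\bar A_N-\bar A_{N'}|+\delta_{N'}$. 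The argument never couples $\mu_N$ and $\mu_{N'}$, so no shared probability space is needed. Each bound is applied under its own law, as Section~\ref{sec:bound} requires.

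For the greedy statement, I would rerun the same two steps with the greedy analogue of Proposition~\ref{prop:bound}. That analogue is Proposition~\ref{prop:greedy}: agreement lies between $[1-\rho_{\mathrm g}-\delta_{\mathrm g}]_+$ and $1-|\rho_{\mathrm g}-\delta_{\mathrm g}|$. Its centred form $|\mathbb E[\alpha^{\mathrm g}]-(1-\rho_{\mathrm g})|\le\delta_{\mathrm g}$ follows from the two-sided bound because $1-|x-y|\le 1-x+y$ and $[1-x-y]_+\ge 1-x-y$.

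The only step I expect to need care is this greedy substitution. I must check that Proposition~\ref{prop:greedy} is stated for an arbitrary law, with $\rho_{\mathrm g}(W,0)$ meaning the full-versus-truncated greedy flip rate under $\mu_N$ (the zero indicating no sinks). The point-mass triangle-inequality argument behind it is pointwise, so averaging under any $\mu_N$ should go through unchanged.
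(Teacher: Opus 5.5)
Your proposal is correct and follows the paper's own proof: you apply Proposition~\ref{prop:bound} separately at each length for the uniform guarantee, then insert $1-\bar A_N$ and $1-\bar A_{N'}$ and use the deviation bound twice, with Proposition~\ref{prop:greedy} and its centred form handling the greedy case. One notational correction that does not affect the argument: the $0$ in $\rho_{\mathrm g}(W,0)$ is the TV threshold $\tau=0$ from the definition of $\rho(W,\tau)$, which for point masses is exactly the full-versus-truncated disagreement rate; it is not the sink count.
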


\begin{proof}
Apply Proposition~\ref{prop:bound} at each length.
For the drift bound, insert $1-\bar A_N$ and $1-\bar A_{N'}$ and apply the deviation bound of Proposition~\ref{prop:bound} twice.
\end{proof}

In particular, if truncation is constant across lengths and fidelity is at most $b$, the guaranteed pairwise drift is at most $2b$.
A deployment sweep and a paired intervention that changes retained distant context at fixed prediction positions are different laws, and quantities measured under one may not be substituted into a bound stated under the other.

\subsection{Extended discussion of the decomposition}
\label{app:extended}

\textbf{What the decomposition buys.}
Calling term (B) ``fidelity'' does not identify its cause.
Choosing $W\lesssim L_{\mathrm{train}}$ bounds the visible attention window and assigning rotary indices inside $[0,\Pmax)$ avoids out-of-range rotations, but neither step bounds term (B), guarantees that mid-document windows resemble training inputs, nor proves that an unwindowed baseline must fail.
We therefore measure fidelity rather than assuming it is small.
The inequalities would hold with any specified comparison distribution, but changing the comparator changes both gaps and their interpretation; throughout, $\ptrunct$ denotes the fresh-window target pass.

\textbf{The threshold form.}
On our measurements the mean form dominates it by 0.14 to 0.27 at every window, which is why the main text reports the mean form.
For $\tau\in[0,1]$ define the events $\mathcal L_\tau:=\{\mathrm{TV}(p_t,\ptrunct)\le\tau\}$ and $\mathcal R_\tau:=\mathcal L_\tau^c$ under $\mu$, with $\rho(W,\tau):=\mu(\mathcal R_\tau)$.

\begin{corollary}[Threshold form]
\label{cor:threshold}
For every $\tau\in[0,1]$,
\[
\mathbb E_\mu[\alpha_t]
\ge[1-\bar A-\delta]_+
\ge[(1-\rho(W,\tau))(1-\tau)-\delta]_+,
\]
and $\bar A=\int_0^1\rho(W,u)\,du$.
The two unclipped lower-bound expressions are equal at a fixed $\tau$ exactly when $\mathrm{TV}(p_t,\ptrunct)\in\{\tau,1\}$ almost surely.
\end{corollary}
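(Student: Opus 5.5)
We prove the three claims of Corollary~\ref{cor:threshold} in turn: the outer inequality, the layer-cake identity, and the equality case. Throughout we write $A_t:=\mathrm{TV}(p_t,\ptrunct)\in[0,1]$.

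\textbf{First inequality.} The bound $\mathbb E_\mu[\alpha_t]\ge[1-\bar A-\delta]_+$ is exactly the lower endpoint of Proposition~\ref{prop:bound}, so we cite it.

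\textbf{Second inequality.} Since $[\,\cdot\,]_+$ is nondecreasing, it suffices to show
\[
\bar A\le 1-(1-\rho(W,\tau))(1-\tau).
\]
We argue pointwise. On $\mathcal L_\tau$ we have $A_t\le\tau$. On $\mathcal R_\tau$ we use the trivial bound $A_t\le1$. Taking expectations gives
\[
\bar A\le\tau\,(1-\rho)+\rho=1-(1-\rho)(1-\tau).
\]
Rearranged, this is $1-\bar A\ge(1-\rho)(1-\tau)$, and subtracting $\delta$ before clipping gives the claim.

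\textbf{Layer-cake identity.} For a $[0,1]$-valued variable, $A_t=\int_0^1\mathbf 1\{A_t>u\}\,du$. By Tonelli (the integrand is nonnegative and measurable), exchanging expectation and integral gives
\[
\bar A=\int_0^1\mu(A_t>u)\,du=\int_0^1\rho(W,u)\,du,
\]
since $\mathcal R_u=\{A_t>u\}$ by definition.

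\textbf{Equality case.} The two unclipped expressions are
\[
1-\bar A-\delta\qquad\text{and}\qquad(1-\rho)(1-\tau)-\delta.
\]
Their difference is
\[
\mathbb E_\mu\bigl[(\tau-A_t)\mathbf 1_{\mathcal L_\tau}+(1-A_t)\mathbf 1_{\mathcal R_\tau}\bigr].
\]
Both summands are nonnegative, so the difference is zero iff each vanishes almost surely. This means $A_t=\tau$ a.s.\ on $\mathcal L_\tau$ and $A_t=1$ a.s.\ on $\mathcal R_\tau$, which is equivalent to $A_t\in\{\tau,1\}$ a.s. (The converse direction is immediate: if $A_t\in\{\tau,1\}$ a.s., both summands vanish.)

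One edge case deserves a check. At $\tau=1$, $\mathcal R_1$ is empty, so the condition reduces to $A_t=1$ a.s. This is consistent with the set $\{1,1\}$, so the statement holds uniformly in $\tau$.

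The main point of care is the equality characterization: it must be stated for the unclipped expressions, as the corollary does, because once both sides are clipped to zero they can coincide without $A_t\in\{\tau,1\}$. The rest is routine.
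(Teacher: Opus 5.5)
Your proof is correct and follows the paper's own argument: you average the pointwise bound $\mathrm{TV}(p_t,\ptrunct)\le\tau\mathbf 1_{\mathcal L_\tau}+\mathbf 1_{\mathcal R_\tau}$ and combine it with Proposition~\ref{prop:bound}, you use the tail formula for the integral identity, and you get the equality case from the vanishing of the nonnegative pointwise slack. Your explicit slack decomposition $(\tau-A_t)\mathbf 1_{\mathcal L_\tau}+(1-A_t)\mathbf 1_{\mathcal R_\tau}$ and the $\tau=1$ check spell out what the paper states in one line, and your remark on clipping matches the paper's own caveat.
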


\begin{proof}
Pointwise, $\mathrm{TV}(p_t,\ptrunct)\le\tau\mathbf{1}_{\mathcal L_\tau}+\mathbf{1}_{\mathcal R_\tau}$; average and apply Proposition~\ref{prop:bound}.
Equality requires the nonnegative pointwise slack to vanish almost surely, giving the stated condition, and clipping can add further equalities when both bounds are zero.
The integral identity is the tail formula $z=\int_0^1\mathbf{1}\{z>u\}\,du$.
\end{proof}

\textbf{Correctness and reached stages.}
Under a correct greedy verifier, accepted tokens match the full target and the first rejection is replaced by that target's greedy token, provided cache updates, tie-breaking, and target computations are consistent.
Sampling correctness additionally requires the prescribed residual correction rule; it does not follow from an acceptance bound alone.
At reached stage $i$, let $\mu_i$ be the state law conditional on the first $i-1$ proposals being accepted, and set $a_i=[1-\bar A_i-\delta_i]_+$, or $a_i=[1-\rho_{\mathrm g,i}(W,0)-\delta_{\mathrm g,i}]_+$ for greedy verification.
These are the quantities entering Proposition~\ref{prop:block}.

\textbf{What must be measured.}
To explain a particular implementation rather than merely bound it, measure, under the corresponding evaluation law:
\begin{enumerate}
\item acceptance and both gaps across context lengths, which give Proposition~\ref{prop:length}'s drift envelope rather than exact flatness (\secref{sec:e4}, \secref{sec:e1});
\item both gaps on a fixed calibration grid in $W$, counting the five agreement patterns of Appendix~\ref{app:greedy} for greedy runs (\secref{sec:ablations}, \secref{sec:gaps});
\item the deployed cached drafter at the same reached, mid-document states as both target passes, a fresh drafter pass being a different comparison (not yet available; Appendix~\ref{app:remaining});
\item output invariance separated from direct score invariance through the full dependency graph, with boundary outputs measured before any quantitative discrepancy claim (not attempted);
\item accepted drafts, emitted tokens, and runtime reported separately, using Proposition~\ref{prop:block} quantitatively only with reached-stage gap estimates (\secref{sec:setup}).
\end{enumerate}

\subsection{Greedy case analysis, sharpness, and a margin condition}
\label{app:greedy}

\begin{proposition}[Greedy acceptance and error cancellation]
\label{prop:greedy}
Fix deterministic tie-breaking and let $x_t^p,x_t^{\mathrm{trunc}},x_t^q$ be the tokens selected by the full target, the truncated target, and the deployed drafter, with $\alpha_t^{\mathrm g}:=\mathbf{1}\{x_t^p=x_t^q\}$, $\rho_{\mathrm g}:=\mu(x_t^p\ne x_t^{\mathrm{trunc}})$, and $\delta_{\mathrm g}:=\mu(x_t^{\mathrm{trunc}}\ne x_t^q)$.
For a verifier that accepts exactly when the draft token equals the full target's greedy token,
\[
[1-\rho_{\mathrm g}-\delta_{\mathrm g}]_+
\le\mathbb E_\mu[\alpha_t^{\mathrm g}]
\le1-|\rho_{\mathrm g}-\delta_{\mathrm g}|,
\]
and exactly $\mathbb E_\mu[\alpha_t^{\mathrm g}]=\mu(x_t^p=x_t^{\mathrm{trunc}}=x_t^q)+\mu(x_t^p=x_t^q\ne x_t^{\mathrm{trunc}})$.
\end{proposition}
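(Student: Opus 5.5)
The plan is to work pointwise on prediction states, using indicator events rather than total variation, and only then take expectations under $\mu$. The exact identity comes first. Partition the state space by the agreement pattern of the triple $(x_t^p,x_t^{\mathrm{trunc}},x_t^q)$. Since $\alpha_t^{\mathrm g}=\mathbf 1\{x_t^p=x_t^q\}$, the event $\{x_t^p=x_t^q\}$ splits into two disjoint pieces: one where $x_t^{\mathrm{trunc}}$ equals the common value, and one where it differs. Taking expectations gives $\mathbb E_\mu[\alpha_t^{\mathrm g}]=\mu(x_t^p=x_t^{\mathrm{trunc}}=x_t^q)+\mu(x_t^p=x_t^q\ne x_t^{\mathrm{trunc}})$. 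Deterministic tie-breaking is needed only so that each token is a well-defined function of the state, which makes these events measurable.

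For the two-sided bound, I will use the discrete metric $d(x,y)=\mathbf 1\{x\ne y\}$, which is the total variation between point masses. This is the same triangle-inequality argument as in Proposition~\ref{prop:bound}.
\begin{itemize}
\item \textbf{Lower bound.} Pointwise, $\mathbf 1\{x_t^p\ne x_t^q\}\le\mathbf 1\{x_t^p\ne x_t^{\mathrm{trunc}}\}+\mathbf 1\{x_t^{\mathrm{trunc}}\ne x_t^q\}$. Averaging gives $1-\mathbb E_\mu[\alpha_t^{\mathrm g}]\le\rho_{\mathrm g}+\delta_{\mathrm g}$. Because $\mathbb E_\mu[\alpha_t^{\mathrm g}]\ge0$ trivially, we may clip and obtain $[1-\rho_{\mathrm g}-\delta_{\mathrm g}]_+$.
\item \textbf{Upper bound.} The reverse triangle inequality gives $\mathbf 1\{x_t^p\ne x_t^q\}\ge|\mathbf 1\{x_t^p\ne x_t^{\mathrm{trunc}}\}-\mathbf 1\{x_t^{\mathrm{trunc}}\ne x_t^q\}|$. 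Taking expectations and applying Jensen to the absolute value yields $1-\mathbb E_\mu[\alpha_t^{\mathrm g}]\ge|\rho_{\mathrm g}-\delta_{\mathrm g}|$, which rearranges to the stated upper bound.
\end{itemize}

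Both bounds can also be read off the exact identity, which gives a useful cross-check. Write $\pi_1,\dots,\pi_5$ for the masses of the five agreement patterns: all equal; only $x_t^{\mathrm{trunc}}$ differs; only $x_t^q$ differs; only $x_t^p$ differs; all distinct. Then $\rho_{\mathrm g}=\pi_2+\pi_4+\pi_5$ and $\delta_{\mathrm g}=\pi_2+\pi_3+\pi_5$, while the acceptance is $\pi_1+\pi_2$. The lower bound becomes $\pi_1+\pi_2\ge 1-2\pi_2-\pi_3-\pi_4-2\pi_5$, which holds because the masses sum to one and are nonnegative. The upper bound amounts to $\pi_3+\pi_4+\pi_5\ge|\pi_4-\pi_3|$, which also holds.

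The only point needing care is the convention that the verifier accepts exactly on equality with the full target's greedy token. That convention is what identifies $\alpha_t^{\mathrm g}$ with the indicator. I also need to check that all three tokens are computed under the same law $\mu$ of reached states, and the statement fixes this by hypothesis. No real obstacle is expected; the work is bookkeeping of the five-pattern partition.
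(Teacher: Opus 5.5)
Your proof is correct and follows the paper's route. The two-sided bound is Proposition~\ref{prop:bound} specialized to point masses: the triangle and reverse triangle inequalities for the discrete metric, averaged under $\mu$, with Jensen for the absolute value. The exact identity is the same split of $\{x_t^p=x_t^q\}$ according to whether $x_t^{\mathrm{trunc}}$ equals the common token; your five-pattern bookkeeping matches the paper's case table, and your assignments of $\rho_{\mathrm g}$, $\delta_{\mathrm g}$, and acceptance to those patterns are correct.
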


This is Proposition~\ref{prop:bound} applied to point masses at the three selected tokens, with the acceptance event split according to whether the common full-target/draft token equals the truncated-target token.
Write $a=x_t^p$, $b=x_t^{\mathrm{trunc}}$, and $c=x_t^q$.
The following cases are disjoint and exhaustive:
\begin{center}
\begin{tabular}{lccc}
\toprule
Token relation & Truncation flip & Fidelity flip & $\alpha_t^{\mathrm g}$\\
\midrule
$a=b=c$ & 0&0&1\\
$a\ne b=c$ & 1&0&0\\
$a=b\ne c$ & 0&1&0\\
$a=c\ne b$ & 1&1&1\\
$a,b,c$ all distinct & 1&1&0\\
\bottomrule
\end{tabular}
\end{center}
Hence exact acceptance is the all-equal probability plus the cancellation probability.
Writing $P_{00}$ for the all-equal probability and $P_{11}$ for the total probability of the final two rows gives the tighter joint interval
\[
P_{00}\le\mathbb E_\mu[\alpha_t^{\mathrm g}]\le P_{00}+P_{11}.
\]
For vocabularies of at least three symbols the mean bounds are sharp: when desired flip rates $u,v$ satisfy $u+v\le1$, place masses $1-u-v,u,v$ on all-equal, truncation-only, and fidelity-only states.
When $u+v\ge1$, place masses $1-v,1-u,u+v-1$ on truncation-only, fidelity-only, and all-distinct states.
For the upper endpoint, if $u\ge v$, place masses $1-u,u-v,v$ on all-equal, truncation-only, and cancellation states; swap roles when $v\ge u$.
A binary vocabulary is the one exception, and the qualification matters: there $x_t^p\ne x_t^{\mathrm{trunc}}$ and $x_t^{\mathrm{trunc}}\ne x_t^q$ together force $x_t^p=x_t^q$, so the all-distinct state cannot occur and the constructions above are unavailable.  The sharp mean bounds tighten to $|1-u-v|\le\mathbb E_\mu[\alpha_t^{\mathrm g}]\le1-|u-v|$; at $u=v=1$ acceptance is exactly one, so the unrestricted lower endpoint zero is unattainable there.

\begin{proposition}[Greedy margin condition]
\label{prop:margin}
Let $p_t,\ptrunct,q_t$ now denote softmax vectors whose maxima determine greedy tokens.
Suppose $p_t$ has unique maximizer $x_t^*$ and margin
\[
m_t:=p_t(x_t^*)-\max_{x\ne x_t^*}p_t(x).
\]
If
\[
2\bigl(\mathrm{TV}(p_t,\ptrunct)+\mathrm{TV}(\ptrunct,q_t)\bigr)<m_t,
\]
then the drafter selects $x_t^*$.
\end{proposition}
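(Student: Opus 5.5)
The plan is to reduce the statement to an $\ell_\infty$ perturbation bound on the softmax vectors. First I would combine the triangle inequality with the hypothesis. Set $D:=\mathrm{TV}(p_t,\ptrunct)+\mathrm{TV}(\ptrunct,q_t)$; then $\mathrm{TV}(p_t,q_t)\le D$, which is the same step as \eqref{eq:decomp}. Next I would use the elementary fact that $\mathrm{TV}(u,v)=\tfrac12\|u-v\|_1$ controls every coordinate. For any $x$,
\[
|p_t(x)-q_t(x)|\le\mathrm{TV}(p_t,q_t)\le D,
\]
because for probability vectors the positive and negative parts of $u-v$ each have mass exactly $\mathrm{TV}(u,v)$. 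Since this holds for every coordinate, we also get $\|p_t-q_t\|_\infty\le D$.

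For the comparison step, fix any $x\ne x_t^*$. I would write
\[
q_t(x_t^*)-q_t(x)\ \ge\ \bigl(p_t(x_t^*)-D\bigr)-\bigl(p_t(x)+D\bigr)\ \ge\ m_t-2D\ >\ 0,
\]
where the final strict inequality is the hypothesis $2D<m_t$. Hence $x_t^*$ is the unique maximizer of $q_t$. This means the drafter's greedy token is $x_t^*$ whatever the tie-breaking rule, because no tie can occur.

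The only step needing care is the coordinatewise bound $|u(x)-v(x)|\le\mathrm{TV}(u,v)$, as opposed to the weaker $\le\|u-v\|_1=2\,\mathrm{TV}$. Using the weak version would cost a factor of two and would not match the stated constant. Even so, the stated hypothesis already carries the factor $2$, so the weak $\ell_1$ bound $|u(x)-v(x)|\le 2\,\mathrm{TV}$ suffices only with $4D<m_t$, and I would not rely on it. I would therefore prove the sharp version explicitly, via $\sum_x(u-v)(x)=0$: any single positive coordinate is at most the total positive mass, which equals $\tfrac12\|u-v\|_1$. With that in hand the argument actually goes through under the weaker condition $2D<m_t$ exactly as stated, so no margin is lost.

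Finally, I would note that the argument also shows the truncated target picks $x_t^*$, since $\mathrm{TV}(p_t,\ptrunct)\le D$. So the point-mass triple of Proposition~\ref{prop:greedy} sits in the all-equal row of its case table. The margin condition is therefore a sufficient pointwise certificate for $\alpha_t^{\mathrm g}=1$.
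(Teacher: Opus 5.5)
Your proof is correct and is essentially the paper's argument, run directly instead of by contradiction: the paper assumes some $y\ne x_t^*$ maximizes $q_t$ and derives $m_t\le|p_t(x_t^*)-q_t(x_t^*)|+|q_t(y)-p_t(y)|\le 2\,\mathrm{TV}(p_t,q_t)\le 2\bigl(\mathrm{TV}(p_t,\ptrunct)+\mathrm{TV}(\ptrunct,q_t)\bigr)$, a contradiction. The paper bounds those two distinct coordinates jointly by $\|p_t-q_t\|_1=2\,\mathrm{TV}(p_t,q_t)$, which already gives the constant $2$; the factor-of-two loss you warn about arises only if the $\ell_1$ norm is applied to each coordinate separately, so your sharper coordinatewise lemma is valid but not needed.
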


\begin{proof}
If another token $y$ maximized $q_t$, then
\[
m_t\le |p_t(x_t^*)-q_t(x_t^*)|+|q_t(y)-p_t(y)|
\le2\mathrm{TV}(p_t,q_t)
\le2\bigl(\mathrm{TV}(p_t,\ptrunct)+\mathrm{TV}(\ptrunct,q_t)\bigr),
\]
a contradiction.
\end{proof}
Without a margin premise, arbitrarily small softmax {TV} can coexist with greedy disagreement.

\subsection{Finite-prefix corrections and stationarity}
\label{app:finite}

Here only, let $\mu_N$ average positions $1,\ldots,N$ uniformly in one fixed process.
At fixed $t$, put
\[
a_t:=\mathbb E[\mathrm{TV}(p_t,\ptrunct)],\qquad
b_t:=\mathbb E[\mathrm{TV}(\ptrunct,q_t)].
\]

\begin{proposition}[Finite-prefix and exceptional-position correction]
\label{prop:finite}
Let $S\subseteq\mathbb N$ be deterministic, $t_0\in\mathbb Z_{\ge1}$, $a,b\in[0,1]$, and $\eta_A,\eta_B\ge0$.
Suppose that for every $t\ge t_0$ outside $S$,
$|a_t-a|\le\eta_A$ and $|b_t-b|\le\eta_B$.
Set
\[
\beta_N:=\frac{|S\cap\{1,\ldots,N\}|}{N},
\qquad
\epsilon_N:=\frac{t_0-1}{N}+\beta_N.
\]
Then
\[
|\bar A_N-a|\le\eta_A+\epsilon_N,\qquad
|\delta_N-b|\le\eta_B+\epsilon_N.
\]
Each endpoint of Proposition~\ref{prop:bound} differs from that computed at $(a,b)$ by at most
$\eta_A+\eta_B+2\epsilon_N$.
\end{proposition}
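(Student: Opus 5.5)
The plan is to write each average as a sum over positions, split that sum into three pieces, and bound each piece separately. Write $\bar A_N=\frac1N\sum_{t=1}^N a_t$ and $\delta_N=\frac1N\sum_{t=1}^N b_t$; this is valid because $\mu_N$ is uniform over positions $1,\ldots,N$, so the tower property turns the expectation under $\mu_N$ into an average of the per-position expectations. Then $\bar A_N-a=\frac1N\sum_{t=1}^N(a_t-a)$, and I split the index set $\{1,\ldots,N\}$ into three parts:
\begin{itemize}
\item the early set $E=\{1,\ldots,t_0-1\}$;
\item the exceptional set $S\cap\{1,\ldots,N\}$;
\item the good set $G$, the rest.
\end{itemize}

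On $G$ each summand is at most $\eta_A$ in absolute value by hypothesis, contributing at most $\frac{|G|}{N}\eta_A\le\eta_A$. On $E\cup S$ I use only that $a_t,a\in[0,1]$, so $|a_t-a|\le1$. The contribution is then at most $\frac{|E\cup S\cap[1,N]|}{N}\le\frac{t_0-1}{N}+\beta_N=\epsilon_N$, by a union bound. This union bound also handles $N<t_0$ without casework, since the count of $E$ within $[1,N]$ is at most $t_0-1$. The identical argument with $b_t,b,\eta_B$ gives the bound for $\delta_N$.

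For the endpoint statement, I note that both endpoints of Proposition~\ref{prop:bound} are $1$-Lipschitz in each argument with respect to the $\ell_1$ perturbation of $(\bar A,\delta)$:
\begin{itemize}
\item $(x,y)\mapsto[1-x-y]_+$ satisfies $|f(x,y)-f(x',y')|\le|x-x'|+|y-y'|$, since $z\mapsto[z]_+$ is $1$-Lipschitz;
\item $(x,y)\mapsto1-|x-y|$ satisfies the same inequality by the reverse triangle inequality.
\end{itemize}
Substituting the two deviation bounds gives a difference of at most $\eta_A+\eta_B+2\epsilon_N$.

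The only point needing care is the step that reduces $\bar A_N$ to an average of the $a_t$: it requires the same fixed process to define $a_t$ as the $t$-marginal of $\mu_N$. This holds by the stated convention that $\mu_N$ is uniform over positions within one process. Everything else is routine, so I expect no real obstacle beyond stating that convention explicitly.
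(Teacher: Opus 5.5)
Your proposal is correct and follows the paper's proof. The paper likewise bounds the regular positions' contribution by $\eta_A$, charges the at most $t_0-1+N\beta_N$ early or exceptional positions at most one each, repeats the argument for fidelity, and concludes because both endpoint maps of Proposition~\ref{prop:bound} are one-Lipschitz in each argument; the reduction $\bar A_N=\frac1N\sum_{t\le N}a_t$ you flag rests on the same convention the paper fixes just before the statement, that $\mu_N$ averages positions uniformly within one fixed process.
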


\begin{proof}
Regular positions contribute at most $\eta_A$ each to the first average.
At most $t_0-1+N\beta_N$ other positions contribute, each by at most one.
The fidelity result is identical.
Both endpoint maps are one-Lipschitz in each argument, proving the last claim.
\end{proof}

Stationarity of a finite suffix is not full-target stationarity.
If, outside initialization and a deterministic exceptional set, $\ptrunct=f(z_t)$ and $q_t=g(z_t)$ for fixed maps $f,g$ of a common state $z_t$ whose law is stationary, the fidelity mean is constant there.
The finite-window case $z_t=x_{t-H:t-1}$ is the instance used above; a cache-windowed comparator and a target-conditioned drafter instead share a state that includes target-computed features, in which case stationarity of the text process alone is not enough.
It supplies neither small fidelity nor constant truncation gap, because the full target can use a growing distant prefix.

\subsection{Emitted-length proof and the geometric model}
\label{app:blockproof}

\begin{proposition}[Emitted block length without independence]
\label{prop:block}
Let $K\in\mathbb Z_{\ge0}$.  A round has $K$ \emph{scheduled} proposal stages and stops verification at the first rejection, emitting one correction token after a rejection or one bonus token once all $K$ proposals are accepted; {EOS} and output-budget termination do not interrupt this accounting.  Let $J$ be the number of accepted proposals, so the round emits $J+1$ tokens.
Suppose that at every reachable stage $i$ the hazard satisfies $\Pr(J\ge i\mid J\ge i-1)\ge a_i$ with $a_i\in[0,1]$.
Then, under either verification rule, $\mathbb E[J+1]\ge1+\sum_{j=1}^{K}\prod_{i=1}^{j}a_i$, and if every reachable stage has $a_i\ge a$ then $\mathbb E[J+1]\ge\sum_{j=0}^{K}a^j$.
No independence of acceptance events is assumed.
\end{proposition}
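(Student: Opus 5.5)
The plan is to express $\mathbb E[J+1]$ through tail probabilities and then bound each tail by telescoping the hazard condition. Since $J$ takes values in $\{0,1,\dots,K\}$, the tail-sum identity gives $\mathbb E[J]=\sum_{j=1}^{K}\Pr(J\ge j)$. Hence
\[
\mathbb E[J+1]=1+\sum_{j=1}^{K}\Pr(J\ge j),
\]
and the first claim follows once we show $\Pr(J\ge j)\ge\prod_{i=1}^{j}a_i$ for every $j\le K$. This identity uses only the round's accounting: at most $K$ acceptances, one emitted token beyond them, and EOS or budget termination not interrupting the count. It does not depend on the verification rule, which is why the conclusion holds under either rule.

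The tail bound comes from the chain rule. Because $\{J\ge i\}\subseteq\{J\ge i-1\}$ and $\Pr(J\ge0)=1$, we have
\[
\Pr(J\ge j)=\prod_{i=1}^{j}\Pr(J\ge i\mid J\ge i-1).
\]
We then argue by induction on $j$. If $\Pr(J\ge i-1)>0$, stage $i$ is reachable, the conditional probability is well defined, and it is at least $a_i$. This gives $\Pr(J\ge i)\ge a_i\Pr(J\ge i-1)\ge\prod_{k\le i}a_k$.

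The one delicate point is a stage that is not reachable, which is also where no independence is needed. Suppose $\Pr(J\ge i-1)=0$ for some $i$. The hypothesis then says nothing about stage $i$, but the inductive hypothesis $\Pr(J\ge i-1)\ge\prod_{k<i}a_k$ with $a_k\ge0$ forces $\prod_{k<i}a_k=0$. So the desired inequality $\Pr(J\ge j)\ge0=\prod_{k\le j}a_k$ holds trivially for all $j\ge i-1$. The induction therefore goes through with no assumption at unreachable stages.

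Only conditional hazards are ever used, never a product of unconditional acceptance probabilities. This is exactly why no independence of acceptance events is required: the hazards $a_i$ may correspond to arbitrary stage-dependent laws $\mu_i$, as in Appendix~\ref{app:extended}.

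For the second claim, replace each $a_i$ by the common lower bound $a$. Monotonicity of products of numbers in $[0,1]$ gives $\prod_{i\le j}a_i\ge a^j$, so
\[
\mathbb E[J+1]\ge1+\sum_{j=1}^{K}a^j=\sum_{j=0}^{K}a^j.
\]
The case $K=0$ is trivial, since then $J=0$ and both sides equal one. I expect the main obstacle to be bookkeeping rather than analysis: handling unreachable stages and zero-probability conditioning cleanly, and confirming that "reachable" is defined as $\Pr(J\ge i-1)>0$ so that the hypothesis matches exactly the stages the telescoping product needs.
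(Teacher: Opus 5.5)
Your proof is correct and follows essentially the same route as the paper. It combines the tail-sum identity $\mathbb E[J+1]=1+\sum_{j=1}^{K}\Pr(J\ge j)$ with the chain rule over stage hazards to get $\Pr(J\ge j)\ge\prod_{i=1}^{j}a_i$; the one difference is at unreachable stages, where the paper sets later hazards and lower bounds to zero by convention while your induction shows the lower product is already zero there.
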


A pooled corpus gap does not control every reached-stage law, emitted length alone does not determine speedup, and the result assumes fixed $K$.
Write $s_i=\Pr(J\ge i\mid J\ge i-1)$ whenever the conditioning event has positive probability, and set later hazards and their lower bounds to zero once the reach probability vanishes; total expectation under $\mu_i$ gives $s_i\ge a_i$ at every reachable stage.
The probability chain rule yields
$\Pr(J\ge j)\ge\prod_{i=1}^{j}a_i$.
The tail-sum identity
\[
\mathbb E[J+1]=1+\sum_{j=1}^{K}\Pr(J\ge j)
\]
proves Proposition~\ref{prop:block}.
An unreachable stage has zero survival and all later lower products vanish.
Reached-stage laws may differ from corpus, emitted-token, or round-start laws; logits computed after the first rejection do not represent reached stages.

Under the separate geometric approximation that, conditional on a round-start state, within-round acceptances are independent with common probability $\alpha_t$,
\[
\mathrm{MAL}(\alpha_t,K)=\sum_{j=0}^{K}\alpha_t^j.
\]
This polynomial is increasing and convex, so Jensen and Proposition~\ref{prop:bound} give
\[
\mathbb E_\mu[\mathrm{MAL}(\alpha_t,K)]
\ge\mathrm{MAL}(\mathbb E_\mu[\alpha_t],K)
\ge\mathrm{MAL}([1-\bar A-\delta]_+,K).
\]
This counts the correction or bonus token; accepted draft count is one less.
Proposition~\ref{prop:block} avoids the geometric model through reached-stage laws.

\section{Extended related work}
\label{app:related}

\textbf{Windowed drafting at moderate context.}
\citet{sadhukhan2025magicdec} show speculative decoding remains profitable at large batch sizes for long sequences using drafters with sparse {KV} caches, including a small drafter with a {StreamingLLM}-style windowed cache.
We share the windowing insight but operate where they do not: their draft {KV} budgets are 256--2049 tokens at contexts up to 100k, within the models' native rotary range, so the position problem of \secref{sec:intro} never arises and no position mechanism is needed.
Their analysis is also driven by batch size, whereas we run at batch size~1 and one to two orders of magnitude longer.
We adopt their cost decomposition for our own accounting.

\textbf{Streaming and cache-relative positioning.}
\citet{xiao2024streamingllm} enable unbounded streaming by pairing a rolling window with pinned attention sinks, assigning positions by index \emph{within the cache} and, for {RoPE}, rotating the cache at each decoding step.
The relevant distinction is whether a key's assigned index is stable for its lifetime.
Theirs is cache-relative, so a one-token slide shifts every local index and the whole window must be re-rotated; ours is text-relative up to a constant per index band, so a key is rotated once at cache entry and never touched again.
Cache-relative positioning is sound and we do not claim to be simply cheaper: the trade is their recurring re-rotation against our possible boundary effects (\secref{sec:rebasing}).
A quantitative output comparison at those boundaries, and a wall-clock comparison accounting for fused-kernel and bandwidth effects, remain future work.

\textbf{Trained long-context drafters.}
\citet{yang2026longspec} train a drafter with sliding-window self-attention and cache-free cross-attention, with anchor-offset indices to generalize position encoding from short training sequences; \citet{wang2026specla} train an {EAGLE}-style drafter and address recurrent-state rollback for linear-attention targets.
Both solve the position-generalization problem we solve, but through training.
Our contribution is that no \emph{long-context} training is required.

\textbf{Native heads, retrieval drafting, and benchmarks.}
Windowed-{MTP}~\citep{valliappan2026windowed} is closest in mechanism, being training-free and windowing draft attention while verification stays at full context, but it windows a native multi-token-prediction head already inside the target checkpoint, so targets without such a head cannot use it.
We window an \emph{external}, separately released drafter, which is what forces the position problem: a native head shares the target's positional regime and an external short-context drafter does not.
Retrieval- and $N$-gram-based drafters~\citep{saxena2023pld,fu2024lookahead,ou2024anpd,he2024rest} also exploit local continuation structure but require an exact prior occurrence; the same acceptance inequalities analyze them once a comparator is fixed.
Benchmarks built specifically for speculative decoding remain predominantly short-context~\citep{xia2024specbench}; {SPEED-Bench}~\citep{abramovich2026speedbench} reaches 32k with third-party entropy tiers, which we use for the gap measurements of \secref{sec:gaps} and the window ablation of \secref{sec:ablations}, though it cannot reach the context lengths of \secref{sec:e4}.
General long-context benchmarks go further and we use {LongBench}~\citep{bai2024longbench,bai2025longbench2} and {RULER} directly (Appendices~\ref{app:datasets} and~\ref{app:niah}), but they are not instrumented for acceptance or emitted length, so the per-round quantities of \secref{sec:experiments} must be collected by us either way.

\section{Why the comparison is internal}
\label{app:baselines}

This paper reports no external speculative-decoding system as a baseline.  The reason is not that none was sought, but that the operating point the paper is about (an externally released and \emph{frozen} short-context drafter, verification left exact against the complete prefix, single-stream decoding, and context to 2M tokens) is one at which each published alternative is either undefined or defined only after doing the thing whose necessity is in question.  We give the obstruction method by method, then say what our arms do in place of an external comparison.

\textbf{Methods that require training.}
\citet{yang2026longspec} and \citet{wang2026specla} train long-context drafters, and \citet{tan2025specpv} repairs a released {EAGLE-3} head's positional range by {YaRN} fine-tuning at 32k.
Matching any of them at our lengths means first producing a trained drafter for each of our three targets.  The resulting number would measure the value of that training, which is the very quantity our claim concerns, and it would no longer hold the drafter fixed, so the comparison would answer a different question from the one asked.
{TokenSwift}~\citep{wu2025tokenswift} likewise attaches trained draft heads, and {TriForce}~\citep{sun2024triforce} requires a standalone draft model, which for a 2.8T-parameter target no released checkpoint supplies.

\textbf{Methods that require a property our drafters lack.}
Windowed-{MTP}~\citep{valliappan2026windowed} is training-free and windows draft attention as we do, but it windows a multi-token-prediction head already resident inside the target checkpoint.
That head shares the target's positional regime, which is both why it has no position problem to solve and why the method does not transfer to an external drafter carrying its own shorter rotary reach.  The mechanism is close; the precondition is one our setting does not meet.

\textbf{Methods that change what is verified.}
{SpecPV}~\citep{tan2025specpv} verifies against a sparse {KV} cache with periodic full refreshes, and {RAPID}~\citep{chen2025rapid} enriches the target distribution by retrieval.
Both accept a departure from the full-context target's own prediction rule, where we leave verification exact and act only on the drafter's view.  A speedup obtained under a different output law is not commensurable with one obtained under ours, and the two directions compose rather than compete, since a cheaper verifier and a cheaper drafter address different terms of the same round.

\textbf{Methods that are not target-conditioned.}
Retrieval- and $N$-gram drafters~\citep{saxena2023pld,fu2024lookahead,ou2024anpd,he2024rest} propose from surface repetition rather than from target features.
They are training-free and would run at 2M, but they do not bear on the question here, which is whether a short-context \emph{model} can serve a long-context target.  They answer a different question, by not using a model.

\textbf{Methods whose operating point is shorter.}
\citet{sadhukhan2025magicdec} use draft {KV} budgets of 256--2049 tokens at contexts to 100k, inside the models' native rotary range; {SpecPV} reports to 60k, and {TokenSwift} targets sequences beyond 20k.
None of these lengths reaches $\Pmax$, so none encounters the failure of \secref{sec:position}, and a comparison conducted at their lengths would be conducted where our second mechanism is provably the identity map (Corollary~\ref{cor:rebasing}) and therefore has nothing to show.

\textbf{What the arms do instead.}
The nearest published configurations are consequently not missing from the tables; they are in them.
The \emph{unwindowed} arm is a released {DSpark} or {EAGLE} drafter run exactly as shipped, which is what a practitioner obtains by default and the thing the method has to beat.
The \emph{windowed-only} arm is the bounded-draft-cache idea of \citet{sadhukhan2025magicdec} carried to our lengths, where, as in their own regime, no position mechanism is applied.
Because all three arms run on one engine build, one node, and one prompt set, and the latter two differ by a single environment flag, each contrast isolates one factor.  A cross-paper comparison against differently built systems on different hardware could not do that, and \secref{sec:setup} shows the margin at stake in such confounds, a single asymmetry in compiled-graph capture moving a ratio from $0.38\times$ to $2.21\times$.

\textbf{The one alternative we do not run.}
{StreamingLLM}'s cache-relative indexing~\citep{xiao2024streamingllm} is a genuine alternative to re-basing rather than to windowing, and we do not run it.
The trade is stated in \secref{sec:rebasing} and above: their assigned positions are stable within the cache and ours within the text, so they re-rotate the visible cache at every decode step where we rotate once at entry, and we accept possible band-boundary effects where they do not.
Settling which is faster in wall-clock terms needs a fused-kernel and bandwidth accounting we have not done, and we do not claim to be simply cheaper.

\section{Additional experimental tables}
\label{app:tables}

\textbf{Qualifications on the {Kimi-K3} ladder} (\secref{sec:e4}).
Appendix~\ref{app:more} records the repetition problem on forced {PG19} decoding and the reruns that reproduce the cliff with loops removed.

\textbf{Qualifications on the {Qwen} ladder} (\secref{sec:e1}).
The 1M column of Table~\ref{tab:e1} rests on five samples collected across three part-runs and has the widest spread, and its target denominator was collected against a checkpoint with Dual Chunk Attention ({DCA}) enabled while the windowed arm used the noDCA variant.
A revalidation on a pinned engine build, with the target arm re-measured on the matched checkpoint, reproduces both quantities: {MAL} 1.659, 1.693, 1.610, and 1.609 at 2k, 8k, 32k, and 128k against 1.691, 1.766, 1.679, and 1.685 originally, every difference inside one standard error at per-sample standard deviations of 0.085--0.167, with matched speedups of $1.29\times$ to $1.58\times$ still rising with context.
An exploratory sweep over $K\in\{1,3,5\}$ finds $K=3$ at or near the throughput optimum everywhere, with the $K{=}3\to K{=}5$ change flipping sign with context, consistent with draft cost being context-independent under a fixed window while verification cost is not.

\textbf{Sink ablation} (\secref{sec:ablations}).
A paired ablation on {Inferact} at $s\in\{0,128\}$ over four tiers finds the sign of the {MAL} difference alternating and the mean favoring $s=0$, {MAL} 2.19 against 1.96 and $S$ $1.47\times$ against $1.31\times$, so the sink is unused capacity for this drafter rather than a required stabilizer.
The arms were separate engine boots with enough within-arm spread that only the 1M tier is unambiguous, so this justifies a configuration choice rather than a finding about attention sinks in general.

\begin{table}[h]
\caption{{Qwen2.5-7B-Instruct-1M} with its {EAGLE-3} head on {PG19}, $W=2048$, $K=3$, $n=5$.  The 1M column ($\dagger$) is excluded from every claim in the paper, its target denominator having been collected against a {DCA}-enabled checkpoint while the windowed arm used the noDCA variant; \Secref{sec:e1} reads the ladder to 512k.}
\label{tab:e1}
\begin{center}
\small
\begin{tabular}{lrrrrrrrrrr}
\toprule
& 2k & 4k & 8k & 16k & 32k & 64k & 128k & 256k & 512k & 1M$^\dagger$\\
\midrule
windowed {MAL}   & 1.69 & 1.63 & 1.77 & 1.74 & 1.68 & 1.70 & 1.69 & 1.81 & 1.61 & 2.31\\
windowed $S$     & 1.34 & 1.31 & 1.46 & 1.49 & 1.51 & 1.60 & 1.65 & 1.83 & 1.64 & 2.39\\
unwindowed {MAL} & 1.50 & 1.41 & 1.36 & 1.34 & 1.31 & 1.25 & 1.28 & 1.31 & 1.23 & 1.44\\
unwindowed $S$   & 1.22 & 1.16 & 1.13 & 1.14 & 1.14 & 1.11 & 1.13 & 1.19 & 1.11 & 1.34\\
\bottomrule
\end{tabular}
\end{center}
\end{table}

\begin{table}[h]
\caption{Paired window ablation, {Qwen2.5-7B-Instruct} with {EAGLE-3} on {SPEED-Bench}, $K=3$, $\approx$480 samples per tier (\secref{sec:ablations}).  The 2k tier is omitted: its speedups are out of line with both neighbours and its cell counts are irregular.}
\label{tab:e6}
\begin{center}
\small
\begin{tabular}{rrrrrrr}
\toprule
& \multicolumn{2}{c}{unwindowed} & \multicolumn{2}{c}{$W=2048$} & \multicolumn{2}{c}{$W=1024$}\\
\cmidrule(lr){2-3}\cmidrule(lr){4-5}\cmidrule(lr){6-7}
$N$ & {MAL} & $S$ & {MAL} & $S$ & {MAL} & $S$\\
\midrule
512 & 1.75 & 1.38 & 1.75 & 1.34 & 1.75 & 1.37\\
1k  & 1.72 & 1.36 & 1.72 & 1.33 & 1.70 & 1.32\\
4k  & 1.42 & 1.13 & 1.68 & 1.32 & 1.68 & 1.35\\
8k  & 1.33 & 1.08 & 1.63 & 1.32 & 1.64 & 1.36\\
16k & 1.27 & 1.06 & 1.63 & 1.38 & 1.64 & 1.39\\
32k & 1.26 & 1.09 & 1.67 & 1.48 & 1.68 & 1.49\\
\bottomrule
\end{tabular}
\end{center}
\end{table}

\begin{table}[h]
\caption{Greedy agreement patterns at matched corpus positions, fresh-drafter law (\secref{sec:e2delta}).  Cancellation is the $x^p=x^q\ne x^{\mathrm{trunc}}$ row of Appendix~\ref{app:greedy}.}
\label{tab:patterns}
\begin{center}
\small
\begin{tabular}{rrrrrr}
\toprule
$W$ & all agree & trunc.\ only $+$ all distinct & fidelity only & cancellation & greedy agreement\\
\midrule
128  & .3109 & .2143 & .4423 & .0326 & .3434\\
256  & .3246 & .1749 & .4697 & .0309 & .3554\\
512  & .3206 & .1383 & .5109 & .0303 & .3509\\
1024 & .3223 & .1234 & .5314 & .0229 & .3451\\
2048 & .3251 & .1023 & .5543 & .0183 & .3434\\
8192 & .3194 & .0697 & .5983 & .0126 & .3320\\
\bottomrule
\end{tabular}
\end{center}
\end{table}
\section{Additional model pairs and ablations}
\label{app:more}

This appendix collects the model pairs, corpora, and configurations that \secref{sec:experiments} summarizes but does not tabulate.  All of them use the protocol of \secref{sec:setup}: frozen drafters, no weight updates, greedy decoding, batch size~1 unless stated, decode-phase timing, and speedup computed per sample against a paired target-only run on the identical prompt.

\subsection{A third pair: {MiniMax-M3} with {EAGLE-3.1}}

\begin{table}[h]
\caption{{MiniMax-M3} with a released {EAGLE-3.1} head on {PG19}, $K=3$, $W=4096$, $n=5$ per tier, 4096 generated tokens with {EOS} ignored, {TP}=8.}
\label{tab:m3}
\begin{center}
\footnotesize
\begin{tabular}{rrrrr}
\toprule
& \multicolumn{2}{c}{unwindowed} & \multicolumn{2}{c}{windowed, $W{=}4096$}\\
\cmidrule(lr){2-3}\cmidrule(lr){4-5}
context & {MAL} & $S$ & {MAL} & $S$\\
\midrule
4k   & 3.24 & $2.38\times$ & 3.25 & $2.37\times$\\
8k   & 2.81 & $2.09\times$ & 2.82 & $2.08\times$\\
16k  & 3.02 & $2.23\times$ & 3.07 & $2.25\times$\\
32k  & 2.94 & $2.15\times$ & 3.02 & $2.21\times$\\
64k  & 3.19 & $2.32\times$ & 3.32 & $2.42\times$\\
128k & 2.87 & $2.10\times$ & 3.08 & $2.29\times$\\
256k & 2.54 & $1.84\times$ & 2.80 & $2.11\times$\\
512k & 2.56 & $1.85\times$ & 2.83 & $2.18\times$\\
1M   & 2.49 & $1.75\times$ & 3.32 & $2.62\times$\\
\bottomrule
\end{tabular}
\end{center}
\end{table}

Three features of Table~\ref{tab:m3} matter for the argument.
First, the target does not collapse: {MiniMax-M3} uses block-sparse attention and loses only 26\% of its decode rate from 4k to 1M, against 32\% for {Kimi-K3} over the same span, so the windowed arm's advantage cannot be attributed to a baseline that degrades underneath it.
Retention states the same result directly: the windowed arm ends at $1.11$ of its 4k speedup and the unwindowed arm at $0.74$.
Second, below 64k the two arms are within $0.06\times$ of each other and unwindowed is nominally ahead at 4k and 8k; the gap opens from 128k and reaches 50\% relative at 1M.
This is the same shape as Table~\ref{tab:e4} and the same shape as the window ablation of \secref{sec:ablations}, from a third stack.  Windowing here is block-table compaction, a genuine {KV} reduction rather than a mask over a full-length cache, and was verified live in the engine.  A third arm that windows without re-basing is omitted because it agrees with the re-based arm to within $0.05\times$ at every tier, as it must: this drafter's rotary reach of $1{,}048{,}576$ exceeds the deepest position served, so the pair cannot test re-basing.
Third, the mechanism here is acceptance rather than draft cost: the per-round cost in plain-decode units, $\mathrm{MAL}/S$, stays at 1.27--1.37 for the windowed arm at every tier, so a $K{+}1=4$ token verification costs about 30\% more than a single decode and speedup reduces to acceptance divided by that constant.
The drafter's own attention is full {MHA} against the target's 16:1 {GQA}, so its single layer would hold 32~GiB of {KV} at 1M, more than all sixty target layers combined, against 128~MiB windowed; on this pair windowing buys memory and acceptance at once.

\textbf{The verify-path condition.}
An earlier complete sweep of this pair on the engine's default Triton attention backend is retracted for its speedup column and is instructive.
It reported the opposite qualitative result, $0.74\times$ at 1M, with {MAL} within the prompt-level spread of the numbers above.
The diagnostic is the per-round cost $\mathrm{MAL}/S$, which rose from 1.41 at 4k to 4.31 at 1M, and a value of $K{+}1$ means a four-token verification is costing four separate decodes, at which point no drafter and no $K$ can help.
A decode-only trace attributed 82\% of {GPU} time to one kernel, the three dense layers at the bottom of the otherwise sparse target, which cost $4.2\times$ as much for $4\times$ the query rows, so verification was not amortizing at all; the drafter's own forward measured 0.123~ms and was never the problem.
Switching to a correctly parallelized attention kernel raised the target arm as well, by 82\% at 1M, and restored the ratio.
This is also the cleanest illustration of \secref{sec:failures}: acceptance was essentially unchanged while the speedup verdict flipped sign, so an acceptance bound does not determine speedup.
We record it because a kernel configuration masqueraded as an algorithmic result, and because it gives the condition a practical form: track $\mathrm{MAL}/S$, expect a small constant independent of context, and treat drift in it as a systems fault rather than a property of drafting.

\subsection{Re-basing ablation}
\label{app:rebase}

\begin{table}[h]
\caption{Matched toggle of modulus re-basing on {Kimi-K3}, $K=3$, $W=4096$, forced 4096-token generations, at 1M and 2M and extended to further corpora.  The patch is present in both arms and only an environment flag differs, so binary, hardware, prompts, and drafter are identical.}
\label{tab:rebase}
\begin{center}
\footnotesize
\begin{tabular}{llrrrr}
\toprule
& & \multicolumn{2}{c}{without re-basing} & \multicolumn{2}{c}{with re-basing}\\
\cmidrule(lr){3-4}\cmidrule(lr){5-6}
drafter & corpus, context & {MAL} & $S$ & {MAL} & $S$\\
\midrule
{Inferact} & {PG19}, 1M (control) & 2.92 & $1.91\times$ & 2.95 & $1.92\times$\\
{Inferact} & {PG19}, 2M           & 1.06 & $0.68\times$ & 3.18 & $2.05\times$\\
{RadixArk} & {PG19}, 1M (control) & 2.50 & $1.27\times$ & 2.50 & $1.50\times$\\
{RadixArk} & {PG19}, 2M           & 1.08 & $0.71\times$ & 2.56 & $1.68\times$\\
\midrule
{Inferact} & {LongBench}, 2M      & 1.12 & $0.86\times$ & 3.41 & $2.62\times$\\
{RadixArk} & {LongBench}, 2M      & 1.18 & $0.93\times$ & 2.73 & $2.13\times$\\
{Inferact} & {BEAM}, 2M           & 1.14 & $0.87\times$ & 2.29 & $1.71\times$\\
{RadixArk} & {BEAM}, 2M           & 1.16 & $0.91\times$ & 2.14 & $1.66\times$\\
\bottomrule
\end{tabular}
\end{center}
\end{table}

Table~\ref{tab:rebase} is the direct test of the failure mode \secref{sec:position} predicts.
The implementation gathers rotary factors from a table of exactly $\Pmax=1{,}048{,}576$ rows with no bounds check.  Both checkpoints declare that figure as their maximum position and reach it by {YaRN} scaling of a shorter native range, $32{,}768\times32$ for {Inferact} and $65{,}536\times16$ for {RadixArk}, so 1M is the drafters' stated limit and not an incidental allocation; the {Kimi-K3} target is {NoPE} and serves contexts past it while still handing the drafter true absolute positions.
The prediction is a cliff rather than a decay, because the failure is an out-of-range table access rather than extrapolation, and that is what the data show: identical behavior at 1M, where the deepest position is $1{,}048{,}575$, and a collapse to {MAL} near one the moment the context exceeds the table.
Three consequences are worth stating.
The failure is silent in exactly the sense of \secref{sec:position}: the verifier still rejects the bad proposals, so the text stays correct while the speedup goes below $1\times$ and speculation becomes a tax.
Re-basing is provably inert below $\Pmax$, and the control rows confirm it empirically rather than by argument.
And the fix rescues only the windowed drafter, as \secref{sec:strand-one} requires: an unwindowed draft's visible span straddles the band boundary, so no single shift preserves every offset, and it is not recovered.
The {RadixArk} 1M control shows a $0.23\times$ difference in $S$ at equal {MAL}, which is run-to-run throughput variation on that arm rather than an effect of the flag; the {MAL} equality is the quantity the control is about.
The same inertness holds across the ladder of Table~\ref{tab:e4}, where re-basing is nominally ahead in seven of the twelve tiers through 1M and windowing alone in five, the differences staying inside the per-cell spread quoted in \secref{sec:e4}.  Only an environment flag separates the arms on one binary, so this is the expected reading of a control rather than a null result to explain away.

\subsection{Beyond {PG19}: {LongBench}, {BEAM}, and {SPEED-Bench}}
\label{app:datasets}

The first four corpora of Table~\ref{tab:datasets} run the seven tiers from 8k to 2M, each scored on its most trustworthy decode condition: forced 4096 tokens where continuation has no natural stopping point, the span before the first {EOS} where the model answers and stops, and {EOS}-stopped generation for the {BEAM} rerun built to allow it.  The {BEAM}-prefix 2M cells are the one visible dip and are confounded twice, that tier alone changing both prompt construction and source data.

\begin{table}[h]
\caption{Generality of the flatness result of \secref{sec:e4} across corpora, every cell the windowed and re-based arm on {Kimi-K3} at $K=3$, $W=4096$, batch 1 under one shared setup.  The $S$ range column spans all measured tiers; retained is a tier's speedup divided by the same arm's speedup at 8k.  {PG19} is the ladder of Table~\ref{tab:e4}.}
\label{tab:datasets}
\begin{center}
\footnotesize
\setlength{\tabcolsep}{4pt}
\begin{tabular}{llcrrcrr}
\toprule
& & \multicolumn{3}{c}{{Inferact}} & \multicolumn{3}{c}{{RadixArk}}\\
\cmidrule(lr){3-5}\cmidrule(lr){6-8}
& & & \multicolumn{2}{c}{retained at} & & \multicolumn{2}{c}{retained at}\\
\cmidrule(lr){4-5}\cmidrule(lr){7-8}
corpus & decode & $S$ range & 1M & 2M & $S$ range & 1M & 2M\\
\midrule
{PG19} books        & forced          & $2.10$--$2.55$ & $1.02$ & $0.90$ & $1.63$--$2.09$ & $0.93$ & $1.19$\\
{LongBench} v1+v2   & forced          & $2.29$--$2.62$ & $1.11$ & $1.13$ & $1.23$--$2.18$ & $1.30$ & $1.27$\\
{BEAM} prefixes     & pre-{EOS}       & $1.71$--$2.52$ & $1.06$ & $0.72$ & $1.66$--$2.46$ & $1.01$ & $0.68$\\
{BEAM} v2 chats     & {EOS}-stopped   & $1.77$--$2.48$ & $1.28$ & $0.96$ & $1.68$--$2.47$ & $1.24$ & $0.88$\\
{SPEED-Bench} 1k--32k & forced        & $1.69$--$2.48$ & --     & --     & $1.61$--$2.16$ & --     & --\\
\bottomrule
\end{tabular}
\end{center}
\end{table}

The {Kimi-K3} ladder of \secref{sec:e4} uses {PG19} book continuations.  Four further corpora repeat the comparison with the same seven-arm protocol, spanning natural documents, long conversations in two renderings, and a short-context throughput split (Table~\ref{tab:datasets}).  They exist to answer one objection: that flatness is a property of book prose, which is unusually uniform and therefore unusually easy to draft.  Flatness is what carries over: every corpus that reaches 1M retains $0.93$--$1.30$ of its 8k speedup there on both drafters, and the two failure modes of \secref{sec:e4} reappear in the same shapes on each of the three long-context corpora.  The absolute level is not invariant and is not claimed to be; it is set by how predictable the corpus is.

On \textbf{{LongBench}}, with {LongBench}-v1 documents at 8k--32k and deduplicated {LongBench}-v2 documents at 128k--2M and no document reused across tiers, windowed {Inferact} holds $2.29$--$2.62\times$ flat from 8k to 2M and windowed {RadixArk} $1.68$--$2.18\times$ apart from a single dip to $1.23\times$ on the 32k v1 documents, where {Inferact} holds $2.55\times$, while the unwindowed arms fail in the two distinct ways of \secref{sec:e4}: {Inferact}'s acceptance collapses, {MAL} 3.35 at 8k falling to 1.07 at 2M with $S$ from $2.56\times$ to $0.56\times$, whereas {RadixArk} holds {MAL} 2.2--2.5 to 1M and still slides to $0.36\times$ and then $0.14\times$ because the full-{KV} read dominates each draft step.
That the same two failure modes reappear on natural documents, against the same target, is the strongest available evidence that the effect is not a property of {PG19} prose.

On \textbf{{BEAM}}~\citep{tavakoli2025beam} long conversations, measured on the answer span before the first {EOS} because forced continuation past {EOS} degenerates into repetition that is trivially draftable, both windowed drafters hold $2.2$--$2.5\times$ from 8k to 1M and $1.66$--$1.71\times$ at 2M, while the unwindowed arms again split into acceptance collapse ({Inferact}, {MAL} 1.11 at 2M) and throughput collapse ({RadixArk}, $0.14\times$ at 2M at {MAL} 1.35).
The 2M tier is the one place where flatness visibly dips, to $0.72$ and $0.68$ retention, and it is the least trustworthy cell in the study: that tier alone switches both prompt construction and source data, so its drop cannot be attributed to length.

A second rendering of the same conversations, \textbf{{BEAM} v2}, is the cleanest reference available and exists to remove both objections to the first: every prompt is rendered with the chat template at every tier, so it ends in a real user turn rather than mid-conversation, and generation stops at the model's own {EOS}, so no output contains a forced-continuation loop.  The target emits {EOS} on every sample, with median answers of 458 to 2140 tokens.  Both windowed drafters hold $1.68$--$2.48\times$ across the ladder and retain $0.96$ and $0.88$ of their 8k speedup at 2M, while the unwindowed arms split exactly as before, {Inferact} to {MAL} 1.11 at $0.58\times$ and {RadixArk} to $0.14\times$ at {MAL} 1.32.  That the result survives with loops removed entirely matters more than its level, because forced decode is the one measurement artifact that could have manufactured it.

\textbf{{SPEED-Bench}} is not a long-context test and is not offered as one: its throughput split runs 1k to 32k on untruncated prompts of 1.1k to 36.9k tokens.  It is here for the two things the long-context corpora cannot provide.  The first is coverage of other domains and harder tasks: three task families, code completion, creative writing, and book continuation, each split into a low- and a high-entropy tier, the only place in the study where the drafters work outside long documents and chats.  The second is a clean measurement of what the window itself costs.  From the 8k bucket upward the window is a strict subset of the prompt, about one eighth of it at 32k, so the unwindowed drafter reads several times what the windowed one does while re-basing is inert throughout, every position being far below $\Pmax$.  Across the twenty matched cells the bounded view costs almost nothing: windowed {Inferact} is at most $0.16\times$ behind its unwindowed counterpart, and windowed {RadixArk} is ahead in nine of ten buckets, by up to $0.38\times$.  In acceptance the gap is at most 0.22 {MAL} out of roughly 3.2 on low-entropy code, and within 0.04 on high-entropy prose, where the drafter has least to go on.  Every arm wins, $1.41$--$2.57\times$, and what separates the cells is content rather than configuration, low-entropy code drafting at $2.3$--$2.6\times$ against $1.4$--$2.1\times$ for high-entropy prose, consistent with the entropy-tier effect in Appendix~\ref{app:gapdetail}.

\subsection{Retrieval under a bounded drafter view}
\label{app:niah}

This is the study's stress test of the method on long-context recall.  Everywhere else the task is continuation, where the next token is usually inferable from recent tokens and a bounded drafter view is plausibly harmless by construction.  A needle task is built the other way: the answer depends on one short span placed anywhere in a long context, which is exactly the span a windowed drafter cannot see.  If bounding the view were going to cost accuracy anywhere, it would be here, and the question is whether verification really protects what the system retrieves.
On {RULER} single-needle tasks against {Kimi-K3} with windowed {Inferact}, needles inserted at depths from 10\% to 90\% and five samples per length from 8k to 1M, both the target and the windowed arm answer 120 of 120 correctly on each of the three task variants (Table~\ref{tab:niah}).
Speedup instead tracks how predictable the haystack is: $2.41\times$ mean on the synthetic noise haystack, where the drafter's window is highly predictable, and $1.54\times$ and $1.38\times$ on natural-prose haystacks with numeric and {UUID} needles.
The needle is a short span that usually sits outside the window, and the drafter loses acceptance on the surrounding prose rather than on the answer.
Because accuracy is identical on both arms at every cell while {MAL} varies by a factor of two across variants, this is the cleanest separation in the paper between what a bounded drafter view costs, which is acceptance, and what it does not cost, which is what the system retrieves.

\begin{table}[h]
\caption{Needle-in-a-haystack retrieval on {RULER}'s three single-needle variants, {Kimi-K3} with windowed {Inferact}, $W=4096$, $s=0$, $K=3$.  Needles sit at depths from 10\% to 90\%, five samples per cell, greedy 48-token answers scored by separator-insensitive normalized substring match.  Four representative lengths of the eight measured are shown.}
\label{tab:niah}
\begin{center}
\footnotesize
\begin{tabular}{rrrcc}
\toprule
& & & \multicolumn{2}{c}{answers correct}\\
\cmidrule(lr){4-5}
$N$ & $S$ & {MAL} & target & windowed\\
\midrule
\multicolumn{5}{l}{\emph{single-1}: synthetic-noise haystack, numeric needle (mean $S=2.41\times$)}\\
8k   & $2.24\times$ & 3.37 & 5/5 & 5/5\\
128k & $2.26\times$ & 3.02 & 5/5 & 5/5\\
512k & $2.67\times$ & 3.48 & 5/5 & 5/5\\
1M   & $2.47\times$ & 3.28 & 5/5 & 5/5\\
\midrule
\multicolumn{5}{l}{\emph{single-2}: natural-prose haystack, numeric needle (mean $S=1.54\times$)}\\
8k   & $1.97\times$ & 2.63 & 5/5 & 5/5\\
128k & $1.69\times$ & 2.22 & 5/5 & 5/5\\
512k & $1.39\times$ & 1.80 & 5/5 & 5/5\\
1M   & $1.24\times$ & 1.61 & 5/5 & 5/5\\
\midrule
\multicolumn{5}{l}{\emph{single-3}: natural-prose haystack, {UUID} needle (mean $S=1.38\times$)}\\
8k   & $1.59\times$ & 2.14 & 5/5 & 5/5\\
128k & $1.43\times$ & 1.89 & 5/5 & 5/5\\
512k & $1.33\times$ & 1.73 & 5/5 & 5/5\\
1M   & $1.25\times$ & 1.61 & 5/5 & 5/5\\
\bottomrule
\end{tabular}
\end{center}
\end{table}

\subsection{Batched serving}
\label{app:batch}

Every timing result in \secref{sec:experiments} is batch size~1, and Table~\ref{tab:batch} re-runs the comparison on a batched engine at the same 4096-token decode length, sweeping batch sizes 1, 2, 4, and 8 against contexts from 4k to 800k.
Windowed {Inferact} stays above break-even in all 19 measured cells, $1.14$--$2.45\times$, while unwindowed falls below it in 10 of 19, from roughly 131k onward and down to $0.64\times$.
The ordering of \secref{sec:e4} survives batching from 131k onward, which is where the two arms separate at batch~1 as well, but it does not hold below that: at 4k the unwindowed arm is ahead in three of the four batch sizes, by as much as $0.57\times$ at $B=1$, since at that length it sees little more than the window and pays almost nothing for the difference.  The intermediate rungs cannot be compared at all, the two arms having been tabulated at 32k and 65k respectively.  Where the ordering does hold, the margin does not grow with batching: target throughput scales close to linearly in $B$, so plain batched decode already recovers much of the memory-bandwidth surplus speculation exploits, and the windowed advantage decays toward break-even as $B$ rises even though acceptance is roughly flat in $B$.
At 4k the windowed arm slides from $1.67\times$ to $1.14\times$ across $B=1$ to $8$.
Two artifacts had to be removed before these numbers were stable and both are worth recording, since both inverted a conclusion: capture sizes that covered the sequence count but not the verify query width $B(K{+}1)$ sent every verify forward down an eager path and manufactured a $0.24$--$0.47\times$ collapse at $B\ge4$, and firing concurrent requests at near-maximal {KV} occupancy serializes them behind staggered prefills rather than co-decoding.  The protocol is therefore one engine per arm with prefix caching, each of the $B$ distinct {PG19} prompts prefilled once before all $B$ are fired concurrently, so every cell is a pure co-decode.
Acceptance on this workload is loop-inflated and must not be read from these runs.  {EOS} essentially never fires on {PG19} continuation, 1 of 625 rows, and greedy decode degenerates into repetition at many contexts, the target included, so on non-degenerate cells {MAL} is 2.24 windowed against 1.27 unwindowed, while the degenerate average pulls the unwindowed figure up to 2.57 and masks the collapse.  Speedup stays a fair ratio because the target decodes the same degenerate text, but the windowed 524k and 800k cells are loop-assisted; a three-repetition recheck confirms them reproducible ($2.47\to2.44$ at 524k).

\begin{table}[h]
\caption{Batched serving on {Kimi-K3} with {Inferact}, $W=4096$, $K=3$, 4096 generated tokens per request, natural {EOS}.  Entries are decode-phase speedup over the batched target at the same batch size and context, so each column is its own baseline.  Dashes are combinations whose working set exceeds the {KV} budget, which the engine cannot allocate; they are not skipped measurements.}
\label{tab:batch}
\begin{center}
\footnotesize
\begin{tabular}{rrrrr}
\toprule
& \multicolumn{4}{c}{batch size $B$}\\
\cmidrule(lr){2-5}
$N$ & 1 & 2 & 4 & 8\\
\midrule
\multicolumn{5}{l}{\emph{{Inferact}, windowed}}\\
4k   & 1.67 & 1.75 & 1.44 & 1.14\\
32k  & 2.14 & 1.86 & 1.78 & 1.42\\
131k & 1.74 & 1.77 & 1.71 & 1.56\\
262k & 1.70 & 1.82 & 1.77 & --\\
524k & 2.44 & 2.45 & --   & --\\
800k & 2.19 & 2.08 & --   & --\\
\midrule
\multicolumn{5}{l}{\emph{{Inferact}, unwindowed}}\\
4k   & 2.24 & 1.65 & 1.49 & 1.33\\
65k  & 1.81 & 1.71 & 1.44 & 1.51\\
131k & 0.94 & 1.15 & 0.94 & 0.68\\
262k & 0.82 & 0.73 & 0.64 & --\\
524k & 0.78 & 0.68 & --   & --\\
800k & 0.69 & 0.64 & --   & --\\
\bottomrule
\end{tabular}
\end{center}
\end{table}

\subsection{Sink-size ablation}
\label{app:sinkabl}

Windowed drafters are often paired with \emph{attention sinks}, a few pinned prefix tokens the drafter attends to in addition to its recent window~\citep{xiao2024streamingllm}.  Table~\ref{tab:sink} varies the sink budget on {Inferact} with everything else fixed, at 1M context, $W=4096$, and $n=5$ per point.

\begin{table}[h]
\caption{Sink-size ablation on {Inferact} against {Kimi-K3}, at 1M context with $W=4096$, $K=3$, and $n=5$ per point.  Only the number of pinned prefix tokens varies.  {MAL} is emitted tokens per round out of $K+1=4$.}
\label{tab:sink}
\begin{center}
\small
\begin{tabular}{lrr}
\toprule
drafter & sink tokens $s$ & {MAL}\\
\midrule
{Inferact} & 0  & \textbf{3.26}\\
{Inferact} & 4  & 2.12\\
{Inferact} & 8  & 2.01\\
{Inferact} & 16 & 2.16\\
{Inferact} & 32 & 2.25\\
\bottomrule
\end{tabular}
\end{center}
\end{table}

No sink is decisively best, and the penalty for adding one is large and immediate: any sink at all costs more than a full point of {MAL}, and the loss does not shrink as the budget grows from 4 to 32.  This is the opposite of what sinks do for a streaming language model, where pinning the first few tokens stabilizes generation.  The explanation we favor is positional: while decoding near position 1M, a sink token sits at absolute position $\approx 0$, so the drafter is asked to attend across an offset it never saw in training, and the resulting features are worse than no distant context at all.  A target-conditioned drafter also has less need of a sink than a standalone one, because whatever the prefix contributes is already present in the target features it reads.  This ablation, together with the paired $s\in\{0,128\}$ sweep on the same drafter in Appendix~\ref{app:tables}, is what justifies $s=0$ on {Inferact}.

On {RadixArk} it does not, and we would rather say so than generalize: that drafter responds in the opposite direction on a single sweep, so the sink budget may be drafter-specific, and we report the ablation on {Inferact} alone until a re-run settles the direction.  We nonetheless run {RadixArk} at $s=0$ throughout, for comparability with {Inferact} rather than because this ablation selects that value for it.  No result in the paper turns on the choice, since every arm of every comparison shares the same sink budget; what is not established is that $s=0$ is the best setting for {RadixArk}.

\subsection{Window-size ablation at long context}
\label{app:winabl}

Section~\ref{sec:ablations} varies $W$ at 32k on the {Qwen} pair (Table~\ref{tab:e6}).  Table~\ref{tab:win} repeats the question thirty times further out, at 1M context on {Kimi-K3} with $s=0$ and $n=5$ per point, varying only {Inferact}'s window.

\begin{table}[h]
\caption{Window-size ablation on {Inferact} against {Kimi-K3}, at 1M context with $s=0$, $K=3$, and $n=5$ per point.  Only the drafter's visible window varies.  {MAL} is emitted tokens per round out of $K+1=4$.}
\label{tab:win}
\begin{center}
\small
\begin{tabular}{lrr}
\toprule
drafter & window $W$ & {MAL}\\
\midrule
{Inferact} & 1024 & 2.97\\
{Inferact} & 2048 & 2.94\\
{Inferact} & 4096 & 3.03\\
\bottomrule
\end{tabular}
\end{center}
\end{table}

Acceptance is flat: the spread across a $4\times$ range of window sizes is 0.09 {MAL}, smaller than the sample spread within any one setting.  Two things follow.  First, the saturation found at $W\ge1024$ on the {Qwen} pair at 32k is not an artifact of that pair or that length; the most recent kilotoken carries essentially all of this drafter's usable signal even when a million tokens precede it.  Second, and more useful for deployment, the window is not a tuning knob that has to be re-selected as context grows.  That matters because $W$ is what bounds the drafter's cost, so a practitioner can pick the smallest window that saturates acceptance at a short calibration length and keep it.

This is a statement about acceptance, and \secref{sec:sizing} is the reason it cannot be turned into a statement about the optimal $W$ in general: neither gap is guaranteed monotone in $W$, and the cost side of the trade-off depends on the serving stack.  As with the sink sweep we report {Inferact} only, because the matching {RadixArk} sweep predates the re-basing fix of \secref{sec:rebasing} and its 1M points are not trustworthy.

\section{Gap measurement details}
\label{app:gapdetail}

\begin{table}[t]
\caption{Both gaps at matched positions, {Qwen2.5-7B-Instruct} with {EAGLE-3}, fresh-drafter law.  Intervals are 95\% document-level cluster bootstraps.}
\label{tab:gaps}
\begin{center}
\footnotesize
\begin{tabular}{rrrrrcr}
\toprule
$W$ & $\bar A$ & $\rho_{\mathrm g}(W,0)$ & $\delta$ & $[1-\bar A-\delta]_+$ & measured $\mathbb E_\mu[\alpha_t]$ & feature channel\\
\midrule
128  & .2195 & .247 & .673 & .108 & .317 \; [.261, .369] & .334\\
256  & .1823 & .206 & .675 & .143 & .319 \; [.266, .369] & .314\\
512  & .1476 & .169 & .671 & .181 & .323 \; [.268, .374] & .299\\
1024 & .1225 & .146 & .674 & .203 & .321 \; [.268, .377] & .280\\
2048 & .1013 & .121 & .680 & .219 & .317 \; [.264, .366] & .256\\
8192 & .0608 & .082 & .694 & .245 & .304 \; [.253, .354] & .218\\
\bottomrule
\end{tabular}
\end{center}
\end{table}

\textbf{Entropy tiers on the deployment law.}
Stratifying the window ablation of \secref{sec:ablations} by {SPEED-Bench}'s difficulty tiers, low-entropy prompts accept better in every arm and tier, {MAL} 1.713 against 1.634 at 32k windowed.
This agrees in direction with the corpus-position measurements below, but we do not treat it as independent confirmation, since a stratum labelled by a measured gap cannot then certify that same gap.

\textbf{Out-of-distribution evaluation.}
The {EAGLE-3} head measured in \secref{sec:e2delta} was trained on 1024-token sequences and is evaluated here at 32k mid-document, where it reaches 33--35\% argmax agreement with the truncated target against 61.8\% on its own training distribution.
This mismatch may affect the measured $\delta\approx0.68$, but the available comparisons do not establish the direction or the magnitude of that effect, so we do not present the number as the fidelity gap a purpose-built drafter would show.
The flatness in $W$ and the magnitude of the feature channel are observations under this evaluation law, and each has a reason not to be a pure artifact of the mismatch, the training length being fixed while $W$ varies over a $64\times$ range and the feature channel being a property of the architecture's input rather than of its training length; whether either persists under domain-matched deployment remains to be tested.

\textbf{Entropy tiers.}
The tier separation is stable across all six windows: $\delta$ and $\mathbb E_\mu[\alpha_t]$ are 0.715--0.735 and 0.263--0.276 on the ten high-entropy documents against 0.558--0.592 and 0.406--0.439 on the low-entropy ones.
Because the tiers are third-party labels on the benchmark rather than our own construction, this is a consistency check, not independent confirmation of the gaps.

\textbf{Greedy patterns.}
Joining the two measurements at matched positions recovers four of the five rows of Appendix~\ref{app:greedy}'s case analysis (Table~\ref{tab:patterns}); the truncated-target versus drafter comparison was not stored, so ``truncation only'' and ``all three distinct'' merge, but both are rejections and the accept/reject split remains exact.
Fidelity-only flips grow from 0.442 to 0.598 as truncation-driven rejections shrink from 0.214 to 0.070, leaving greedy agreement almost unchanged at 0.332--0.355.
The greedy fidelity flip rate is bracketed by the fidelity-only and cancellation rows below and the merged rejection row above, giving $\delta_{\mathrm g}\in[0.573,0.675]$ at $W=2048$.

\section{Measurement controls}
\label{app:controls}

Four controls in \secref{sec:setup} are load-bearing rather than hygienic, each having changed a reported quantity by more than the algorithmic effect under study.

\emph{Symmetric compiled-graph capture.}
Both arms must use the same graph-capture setting.
One arm compiled while the other runs eager moves a single speedup ratio from $0.38\times$ to $2.21\times$.

\emph{Post-load rendezvous.}
When two arms share a node, they must be synchronized after weight load, or the arms contend for host-side launch capacity and the ratio measures scheduling rather than drafting.

\emph{Profiler-free timing.}
A per-forward profiler taxes the two arms unequally, since the speculative arm issues more, smaller launches per emitted token; all reported timings are collected without one.

\emph{fp32 reductions for distributional measurements.}
All {TV} reductions in \secref{sec:gaps} are computed in fp32.
In bf16, a same-input/different-batch-shape control alone reaches {TV} $3.0\times10^{-2}$, which cannot resolve the pre-registered tolerance of 0.05; in fp32 the same control has median {TV} $3.0\times10^{-11}$ and p99 $9.6\times10^{-6}$.

\section{What the measurements do not establish}
\label{app:remaining}

\textbf{The gap measurements are not on the deployment law.}
\Secref{sec:gaps} uses corpus positions with the drafter run as a clean forward pass, while \secref{sec:e4} through \secref{sec:ablations} count reached proposal states with the actual cached drafter under eviction, re-basing, and rollback.
The existing harnesses log request-level aggregates only, with no per-round drafted token identities and no target argmax, so per-step greedy data cannot be reconstructed from what was saved.
Closing this requires one instrumented run that records, at sampled reached states through the first rejection, the full-target token, a fresh truncated-target token, and the actual cached drafter token.
That single run yields the complete five-pattern table of Appendix~\ref{app:greedy}, both greedy flip rates, agreement, the two bounds of Proposition~\ref{prop:greedy}, and the reached-stage quantities $a_i$ that Proposition~\ref{prop:block} needs.
Sweeping the same calibration grid in $W$ during that run would also supply both gaps on the deployment law, pairing with the timing already reported in \secref{sec:ablations}.
Workloads should remain separate, document-level resampling should preserve pairing and reached stages, and a paired retained-prefix intervention should be labelled as a distinct counterfactual law.

\textbf{Scope of the theory.}
Proposition~\ref{prop:span} does not apply to a drafter reading full-context features, so the index-locality bound covers the rotary argument and not the information the drafter uses.
The boundary bound of Appendix~\ref{app:ropeproof} counts positions not certified by exactness rather than observed discrepancies, and {MAL} remains a geometric model, so Proposition~\ref{prop:block} needs the stage-specific laws we do not yet have.

\textbf{Distribution shift inside the fidelity gap.}
The instrumented drafter of \secref{sec:e2delta} runs far outside its 1024-token training distribution, so $\delta$ absorbs computation-graph mismatch as well as model difference and may be affected by an amount and in a direction our comparisons do not establish.
The feature channel is a distribution-shift channel the theory does not bound either: at million-token contexts the target's hidden states are produced at sequence lengths the draft head never saw in training, and that shift lives entirely inside $\delta$.

\textbf{Greedy output equivalence.}
Under greedy decoding with a correct verifier, the speculative arm should emit a token-for-token identical stream to target-only decoding; on the {Kimi-K3} pair it did not, in any run, with divergence appearing as early as the first generated token.
Every committed token is chosen by the target's own full-context verification and acceptance is high, so the speculative arm agrees with its own verifier, and the working explanation is benign numerical nondeterminism rather than a verification defect, the two arms being separately booted engines with different decode batch shapes, one row against $K+1$.
This has not been proven.
The supporting evidence is of the right order but not decisive: the target's own batch-shape instability is median {TV} $3.0\times10^{-11}$ and p99 $9.6\times10^{-6}$ in fp32, enough to flip an argmax only where the top-two margin is minuscule.
The decisive control is cheap, one target-only rerun at matched and differing batch shapes, and it is a prerequisite for interpreting the reached-state run above rather than a follow-up to it, since logged token agreement must be reconciled against the actual verifier decision before any gap decomposition is interpreted.

\section{The full truncation grid and the threshold screen}
\label{app:grid}

Table~\ref{tab:grid} summarizes the 34-window target-only sweep of \secref{sec:e2pilot}, over the pre-registered grid $W\in\{128,256\}\cup\{512n:n=1,\ldots,32\}$.
Six of the twenty sampled documents were excluded by a content-only rule, more than 90\% of the measured tail explained by a repeating period of at most 50 tokens; these tails are benchmark padding and yielded $\hat\rho=0$ at every window, which without the filter would have manufactured an apparent floor in the low-entropy tier.
The analysis therefore uses 14 documents and 1750 positions.
The same-input/different-batch-shape control of \secref{sec:setup} has median {TV} $3.0\times10^{-11}$ and p99 $9.6\times10^{-6}$ on this data.

\textbf{Monotonicity.}
Aggregate $\hat\rho(W,0.05)$ falls almost monotonically, from 0.719 to 0.321 with a single increase of 0.006, yet individual positions frequently are not monotone: 590 of the 1347 positions that cross below $\tau=0.05$ at some window later return above it, or 43.8\%, far above the numerical floor.
A window that suffices at one position therefore need not suffice at a larger one, which is why \secref{sec:sizing} claims no monotonicity guarantee.

\textbf{Window construction is load-bearing.}
At $W\in\{512,2048,8192\}$ the median {TV} between a cache-eviction window and a fresh hard-truncated pass is 0.261, 0.211, and 0.206, with p90 of 0.766, 0.586, and 0.581.
A nominal window size does not by itself define a predictor, so the comparator must be specified as an input construction and not only as a length.

\textbf{Entropy tiers.}
The tier ordering is correct at every window but preliminary, only four clean low-entropy documents surviving the padding filter: at $W=2048$, $\hat\rho(W,0.05)$ is 0.709 on ten high-entropy documents against 0.276 on the low-entropy ones, with greedy flip rates 0.146 and 0.058.

For the teacher-forced softmax diagnostic define
\[
B(W):=\sup_{\tau\in[0,1]}(1-\hat\rho(W,\tau))(1-\tau),
\]
so that Corollary~\ref{cor:threshold} gives $\mathbb E_\mu[\alpha_t]\ge[B(W)-\delta]_+$ under the same law.
The corollary also gives the mean form $[1-\bar A-\delta]_+$, and $1-\bar A$ exceeds $B(W)$ by 0.14 to 0.27 at every window, so the threshold screen is the weaker of the two available statements and the main text reports the mean form.
The optimizing tolerance $\tau^\ast$ ranges from 0.310 at $W=128$ to 0.130 at $W=11776$, rather than the pre-registered 0.05, and Figure~\ref{fig:e2bound} plots $B(W)$ over the whole grid.

\begin{table}[h]
\caption{Target-only truncation sweep on the 14 usable 32k documents, selected windows.  Intervals are 95\% document-level cluster-bootstrap intervals on $B(W)$ (2000 resamples).}
\label{tab:grid}
\begin{center}
\small
\begin{tabular}{rrrrrrrc}
\toprule
$W$ & $\bar A$ & $1-\bar A$ & $B(W)$ & $\tau^\ast$ & $\hat\rho(W,0.05)$ & $\rho_{\mathrm g}(W,0)$ & 95\% CI on $B(W)$\\
\midrule
128   & .2195 & .7805 & .5118 & .310 & .719 & .2469 & [.476, .552]\\
256   & .1823 & .8177 & .5659 & .285 & .689 & .2057 & [.528, .601]\\
512   & .1476 & .8524 & .6178 & .265 & .652 & .1686 & [.574, .656]\\
1024  & .1225 & .8775 & .6605 & .240 & .619 & .1463 & [.620, .699]\\
2048  & .1013 & .8987 & .6944 & .215 & .585 & .1206 & [.656, .730]\\
4096  & .0772 & .9228 & .7360 & .195 & .512 & .1000 & [.706, .770]\\
8192  & .0608 & .9392 & .7779 & .155 & .448 & .0823 & [.738, .810]\\
12288 & .0512 & .9488 & .7963 & .135 & .382 & .0589 & [.759, .829]\\
16384 & .0444 & .9556 & .8121 & .135 & .321 & .0629 & [.775, .842]\\
\bottomrule
\end{tabular}
\end{center}
\end{table}

\begin{figure}[h]
\centering
\begin{tikzpicture}
\begin{axis}[
  width=.78\linewidth,height=4.8cm,
  xlabel={Hard-truncation window $W$ (tokens)},
  ylabel={$B(W)$},
  xmin=0,xmax=16500,ymin=.45,ymax=.86,
  xtick={0,4096,8192,12288,16384},
  xticklabels={0,4k,8k,12k,16k},
  ytick={.5,.6,.7,.8},
  grid=major,
]
\addplot[blue,thick,mark=*,mark size=1.2pt]
table[x=W,y=B_W,col sep=comma]{bound_by_W.csv};
\end{axis}
\end{tikzpicture}
\caption{The target-only threshold screen over all 34 windows.  It is monotone in aggregate but is the weaker of the two bounds in Corollary~\ref{cor:threshold}, and it does not measure $\delta$.}
\label{fig:e2bound}
\end{figure}

\end{document}